\providecommand{\R}{\mathbb{R}}
\providecommand{\S}{\mathrm{S}}
\providecommand{\SO}{\mathbf{SO}}
\providecommand{\so}{\mathfrak{so}}
\providecommand{\calB}{\mathcal{B}}
\providecommand{\calC}{\mathcal{C}}
\providecommand{\calI}{\mathcal{I}}
\providecommand{\calS}{\mathcal{S}}
\DeclareMathOperator{\grad}{grad}
\DeclareMathOperator{\diag}{diag}
\providecommand{\tT}{\mathrm{T}} % tangent bundles
\providecommand{\td}{\mathrm{d}}
\providecommand{\tD}{\mathrm{D}}
\providecommand{\ddt}{\frac{\td}{\td t}}
\providecommand{\scirc}{%
    \hbox{\fontfamily{\rmdefault}\fontsize{0.4\dimexpr(\f@size pt)}{0}\selectfont{\raisebox{-0.52ex}[0ex][-0.52ex]{$\circ$}}}}
\mathchardef\mhyphen="2D
\def \S {{\mathrm S}}
\DeclareMathOperator{\sign}{sign}
\DeclareMathOperator{\rank}{rank}
\newcommand{\pushright}[1]{\ifmeasuring@#1\else\omit\hfill$\displaystyle#1$\fi\ignorespaces}
\newcommand{\pushleft}[1]{\ifmeasuring@#1\else\omit$\displaystyle#1$\hfill\fi\ignorespaces}
\newtheorem{theorem}{Theorem}
\newtheorem{definition}{Definition}
\newtheorem{assumption}{Assumption}
\newtheorem{lemma}{Lemma}
\newtheorem{remark}{Remark}
\title{\LARGE \bf
Observer Design for Optical Flow-Based Visual-Inertial Odometry with Almost-Global Convergence
(Extended Version)}
\author{Tarek Bouazza, Soulaimane Berkane, \IEEEmembership{Senior Member, IEEE}, Minh-Duc Hua, \\ and Tarek Hamel, \IEEEmembership{Fellow Member, IEEE}
\thanks{This work was supported by the ``Grands Fonds Marins'' Project Deep-C, the ASTRID ANR project ASCAR, and also in part by NSERC-DG RGPIN-2020-04759 and Fonds de recherche du Qu\'ebec (FRQ).
%This work was supported by the ``Grands Fonds Marins'' Project Deep-C, and the ASTRID ANR project ASCAR. This research work is also supported in part by NSERC-DG RGPIN-2020-04759 and Fonds de recherche du Qu\'ebec (FRQ).
}
\thanks{T. Bouazza, M-D. Hua, and T. Hamel are with I3S, CNRS, Université Côte d'Azur, 06903 Sophia Antipolis, France, T. Hamel is also with Institut Universitaire de France (IUF),
        {\tt\small \{bouazza,hua,thamel\}@i3s.unice.fr}.}
\thanks{S. Berkane is with the Département d’informatique et d’ingénierie, Université du Québec en Outaouis, Gatineau, QC J8X 3X7, Canada, and also with the Department of Electrical
Engineering, Lakehead University, Thunder Bay, ON P7B 5E1, Canada,
{\tt\small soulaimane.berkane@uqo.ca}.}
}
\begin{document}

\maketitle
\thispagestyle{empty}
\pagestyle{empty}

%%%%%%%%%%%%%%%%%%%%%%%%%%%%%%%%%%%%%%%%%%%%%%%%%%%%%%%%%%%%%%%%%%%%%%%%%%%%%%%%
\begin{abstract}
This paper presents a novel cascaded observer architecture that combines optical flow and IMU measurements to perform continuous monocular visual-inertial odometry (VIO). The proposed solution estimates body-frame velocity and gravity direction simultaneously by fusing velocity direction information from optical flow measurements with gyro and accelerometer data. This fusion is achieved using a globally exponentially stable Riccati observer, which operates under persistently exciting translational motion conditions. The estimated gravity direction in the body frame is then employed, along with an optional magnetometer measurement, to design a complementary observer on $\SO(3)$ for attitude estimation. The resulting interconnected observer architecture is shown to be almost globally asymptotically stable.
To extract the velocity direction from sparse optical flow data, a gradient descent algorithm is developed to solve a constrained minimization problem on the unit sphere. The effectiveness of the proposed algorithms is validated through simulation results.
\end{abstract}

%%%%%%%%%%%%%%%%%%%%%%%%%%%%%%%%%%%%%%%%%%%%%%%%%%%%%%%%%%%%%%%%%%%%%%%%%%%%%%%%
    \section{Introduction}
    Over the past decades, the fusion of monocular vision with inertial measurements from MEMS-based Inertial Measurement Units (IMUs) has become a cornerstone of robotics for estimating the navigation states of a robot, namely its position, orientation and linear velocity. The complementary nature of the two sensing modalities makes visual-inertial systems a reliable alternative to Global Navigation Satellite Systems (GNSS), which can be ineffective in indoor environments.
    The classical problem of visual-inertial odometry (VIO), which involves estimating a robot’s trajectory by combining inertial and visual information, is a key component of autonomous navigation for aerial, ground, and underwater vehicles. 
    VIO is commonly considered a special case of the broader visual-inertial Simultaneous Localization and Mapping (SLAM) problem, which also involves reconstructing a map of the unknown surrounding environment.

    Monocular vision-only and visual-inertial odometry algorithms typically fall into two main categories: intensity-based (or direct) and feature-based methods. 
    Direct methods estimate motion directly from image intensity changes across consecutive images, such as SVO \cite{forster2014svo}, and DTAM \cite{newcombe2011dtam}, while feature-based approaches track a sparse set of keypoints across frames. 
    Existing feature-based VIO can be further divided into optimization-based and Extended Kalman Filter (EKF)-based approaches \cite{scaramuzza2019visual}. Optimization-based methods, such as VINS-Mono \cite{qin2018vins} and OKVIS \cite{leutenegger2013keyframe}, refine the trajectory by jointly minimizing a cost function over multiple frames and tend to achieve high accuracy at the expense of computational complexity \cite{delmerico2018benchmark}. In contrast, EKF-based methods, such as MSCKF \cite{mourikis2007multi} and ROVIO \cite{bloesch2015robust}, provide a state estimation framework that fuses visual-inertial measurements in real time, %li2013high
    resulting in computationally efficient algorithms but often at the cost of reduced accuracy due to linearization errors.
    More recently, nonlinear observers designed on matrix Lie groups have been largely adopted for VIO and SLAM to achieve better performance and strong stability guarantees \cite{barrau2015ekf,mahony2017geometric,van2023eqvio,fornasier2023msceqf}.
    EKF-based VIO, and more broadly EKF-based SLAM, when formulated in an inertial frame, is not fully observable due to its inherent invariance to the choice of the reference frame \cite{andrade2004effects}. Without additional measurements, this invariance results in an unobservable subspace corresponding to the inertial position and yaw rotation of the inertial frame \cite{van2023eqvio}.
    
    Unlike feature-based methods that rely on tracking discrete features across frames, and either using the epipolar constraint \cite{ma2004invitation} to compute inter-frame motion \cite{scaramuzza2011visual}, or including them into the state estimator, an alternative formulation consists of using a continuous representation by exploiting the optical flow (or displacement field) of the features \cite{jaegle2016fast} to obtain a velocity direction vector. 
    This \textit{continuous} approach simplifies the VIO problem by directly providing instantaneous velocity information rather than discrete features, which is inherently more suited for odometry tasks.

This paper presents a novel observer architecture for visual-inertial odometry (VIO) that leverages optical flow and IMU measurements to estimate body-frame velocity and gravity direction. The proposed approach addresses the challenge of partial observability of the navigation states in the inertial frame by focusing on the independent estimation of body-frame velocity and gravity direction.
Specifically, we introduce a globally exponentially stable Riccati observer, which fuses velocity direction information from sparse optical flow data with gyro and accelerometer measurements. This observer operates effectively under sufficient persistence of excitation, ensuring robust estimation of the body-frame velocity and gravity direction.

The estimated body-frame gravity vector is then employed in a second-stage nonlinear observer on the Lie group $\SO(3)$, designed to estimate the reduced attitude (pitch and roll) by exploiting gravity as a global reference direction. Since gravity alone only provides the orientation up to an unknown yaw angle, we incorporate an additional body-frame direction measurement, such as the magnetic field, when available, to recover the full attitude. This cascaded estimation framework ensures the decoupling of velocity and attitude estimation, while guaranteeing the almost global asymptotic stability of the interconnection error dynamics.

A complementary key contribution of this work is the development of a pre-observer stage, which employs a novel gradient descent algorithm to estimate the translational velocity direction from sparse optical flow measurements. 
Unlike conventional methods that formulate this problem as a least-squares optimization in $\mathbb{R}^3$ \cite{zhang1999fast,jaegle2016fast}, our method enforces the unit-norm constraint by solving a constrained minimization problem directly on the unit sphere $\mathbb{S}^2$. 
This strategy ensures consistency with the underlying geometry and provides a reliable estimate of the velocity direction required for the proposed observer.
%%%%%%%%%%%%%%%%%%%%%%%%%%%%%%%%%%%%%%%%%%%%%%%%%%%%%%%%%%%%%%%%%%%%%%%%%%%%%%%%
	%-----------------------------------------

The remainder of the paper is organized as follows. Section \ref{sec:prelims} introduces the preliminary notation and uniform observability definitions. Section \ref{sec:problem_desc} formally states the problem at hand, and Section \ref{sec:observer} then outlines the key stages of the proposed observer design and presents the stability analysis. In Section \ref{sec:opt_flow_alg}, we present an iterative algorithm to estimate the velocity direction using optical flow measurements. Finally, Section \ref{sec:simresults} discusses simulation results, followed by concluding remarks in Section \ref{sec:conclusion}.
 \section{Preliminaries} \label{sec:prelims}
	%-----------------------------------------

\subsection{Mathematical notation}

We denote by $\R$ and $\R_+$ the sets of real and nonnegative real numbers, respectively.
The $n$-dimensional Euclidean space is denoted by $\R^n$. 
The Euclidean inner product of two vectors $x,y \in \R^n$ is defined as $\langle x, y \rangle = x^\top y$. The associated Euclidean norm of a vector $x \in \R^n$ is $|x| = \sqrt{x^\top x}$.

We denote by $\R^ {m \times n}$ the set of real $m \times n$ matrices. $\mathbb{S}_+(n)$ denotes the set of $n \times n$ symmetric positive definite (SPD) matrices, and the identity matrix is denoted by $I_n \in \R^{n \times n}$.
Given two matrices $A,B \in \R^{m \times n}$, the Euclidean matrix inner product is defined as $\langle A, B\rangle = \mathrm{tr}(A^\top B)$, the Frobenius norm of $A \in \R^{n\times n}$ is given by $\|A\| = \langle A, A\rangle$.
We denote by $A \otimes B$ the Kronecker product of two matrices $A$ and $B$.

The special orthogonal group of 3D rotations is denoted by $\SO(3) := \{R \in \mathbb{R}^{3 \times 3} \mid RR^\top = R^\top R = I_3, \det(R) = 1\}$. 
Its associated Lie algebra, denoted by $\so(3)$, is defined as $\so(3) = \{ \bm{\omega}^\times \mid \bm{\omega} \in \mathbb{R}^{3} \}$, where the map $(\cdot)^\times: \R^3 \rightarrow \so(3)$ denotes the skew-symmetric operator associated with the vector cross-product, i.e. $u^\times v = u \times v$, for all $u, v \in \R^3$.
The exponential map $\exp : \so(3) \rightarrow \SO(3)$ defines a local diffeomorphism from a neighbourhood of $0 \in \so(3)$ to a neighbourhood of $I_3 \in \SO(3)$.
This allows us to define the composition map $\exp \circ (\cdot)^\times: \R^3 \rightarrow \SO(3)$, which can be computed using Rodrigues' formula \cite{ma2004invitation}.

For any $u \in \R^3 \setminus \{0\}$, the projection $\pi_u: \R^3 \setminus \{0\} \rightarrow \R^{3\times 3}$ onto the plane orthogonal to $u$ is given by
$$ \pi_u :=  I_3 - \frac{u u^\top}{|u|^2} = -\frac{u^\times u^\times}{|u|^2}, $$
which satisfies $\pi_u v = 0$ if $u$ and $v$ are collinear.
Extending this definition to all $u \in \R^3$, we define
$
\bar{\pi}_u := |u|^2 I_3 - u u^\top.
$
Unlike the standard projection, this formulation remains well-defined\footnote{It avoids division by $|u|^2$ and ensures continuity when $u$ crosses zero.} when $u = 0$, where it simply reduces to $\bar{\pi}_0 = 0$. For $u \neq 0$, $\bar{\pi}_u$ is a scaled version of the standard projection matrix; that is, $\bar{\pi}_u = |u|^2 \pi_u$. 

The unit sphere $\mathrm{S}^{2} := \{ \eta \in \R^{3} \mid |\eta| = 1 \} \subset \R^{3}$ denotes the set of unit 3D vectors and forms a smooth submanifold of $\R^3$. 
The tangent space at any $\eta \in \S^2$ is $\tT_\eta \S^2 := \{ \gamma \in \R^3 \mid \langle \gamma, \eta \rangle = 0\}$.
Given a smooth function $\calC: \S^2 \rightarrow \R_+$, the gradient of $\calC$ at $\eta \in \S^2$ is defined as the unique tangent vector $\grad \calC(\eta) \in \tT_\eta \S^2$ that satisfies 
$\tD \calC(\eta)[\gamma] = \left\langle \grad \calC(\eta), \gamma \right\rangle, $
for all $\gamma \in \tT_\eta \S^2$, where $\tD \calC(\eta)[\gamma]$ is the differential of $\calC$ along the direction $\gamma$ evaluated at $\eta$, and $\langle \cdot, \cdot \rangle: \tT_\eta \S^2 \times \tT_\eta \S^2 \rightarrow \R$ denotes the Riemannian metric on $\S^2$.
%\tarekB{
Let $\grad_{\R^3} \calC(\eta)$ denote the Euclidean gradient in $\R^3$, then $\grad \calC(\eta) = \pi_{\eta} \grad_{\R^3} \calC(\eta) \in \tT_{\eta}\S^2$.

A function $g(x)$ is denoted $O(x)$ if it is bounded for any bounded $x \in \R^n$, and $g(x) \rightarrow 0$ as $x \rightarrow 0$.

\subsection{Relevant definitions of observability and stability}
%Uniform observability of linear time-varying systems}
Consider the linear time-varying (LTV) system given by
\begin{equation} \label{eq:ltv_sys}
\begin{cases}
    \dot{\bm{x}} = A(t) \bm{x} + B(t) \bm{u}, \\
    \bm{y} = C(t) \bm{x}.
    \end{cases}
\end{equation}
with state $\bm{x} \in \R^n$, input $\bm{u} \in \R^l$ and output $\bm{y} \in \R^m$. The matrix-valued functions $A(t)$, $B(t)$, and $C(t)$ are assumed to be continuous and bounded.
\begin{definition}[Uniform observability] \label{def_uniformobs}
    The pair $(A(t), C(t))$ %system 
    is called uniformly observable if there exists $\delta, \mu > 0$ such that $\forall t \geq 0$
    \begin{equation} \label{eq:positivegramian}
        W(t, t+ \delta) := \frac{1}{\delta} \int_t^{t+\delta} \Phi^\top(s,t)C^\top(s)C(s)\Phi(s,t) ds \geq \mu I_n.
    \end{equation}
    where $\Phi(s,t)$ is the transition matrix associated with $A(t)$, i.e. solution to $\frac{d}{dt}\Phi(s,t) = A(t) \Phi(s,t), \forall s \geq t$, $\Phi(t,t) = I_n$.
    % , i.e. the unique solution to :
    % \begin{align*}
    %     \frac{d}{dt}\Phi(s,t) &= A(t) \Phi(s,t), \; \forall s \geq t, & \Phi(t,t) &= I_n.
    % \end{align*}
    %\todo{
    %with%}
    \end{definition}
The matrix-valued function $W(t, t + \delta)$ is known as \emph{the observability Gramian} of system \eqref{eq:ltv_sys}. 
A sufficient condition for uniform observability of a subclass of LTV systems is given by the following lemma taken from \cite{hamel2017position}.
\begin{lemma}[\cite{hamel2017position}] \label{lemma:uniform_obs}
Consider System \eqref{eq:ltv_sys} and suppose that  
\begin{itemize} %[left=0pt]
\item $C(t) = \Sigma(t) H$ with $H$ a constant matrix, 
   \item $A$ is constant and all its eigenvalues are real,
   \item  the pair $(A, H)$ is Kalman observable, 
\item $\Sigma(t)$ is persistently exciting, i.e., $ \exists \bar{\delta}, \bar{\mu} > 0$ such that, 
$$  \forall t \geq 0, \; M(t, t+\bar{\delta}) := \frac{1}{\bar{\delta}} \int_t^{t+\bar{\delta}} \Sigma(s) ds \geq \bar{\mu} I_n > 0.$$
\end{itemize}
Then, the observability Gramian satisfies the condition \eqref{eq:positivegramian}.
\end{lemma}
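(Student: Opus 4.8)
The plan is to exploit the three structural hypotheses separately and then combine them, reducing everything to a uniform lower bound on a quadratic form. Since $A$ is constant, the transition matrix is $\Phi(s,t)=e^{A(s-t)}$, so after the change of variable $\tau=s-t$ the Gramian becomes
\[
x^\top W(t,t+\delta)x=\frac1\delta\int_0^\delta\bigl|\Sigma(t+\tau)\,H e^{A\tau}x\bigr|^2\,d\tau .
\]
This isolates all the $t$-dependence inside $\Sigma$, and condition \eqref{eq:positivegramian} reduces to showing that the infimum of this expression over $t\ge0$ and over the unit sphere $\{|x|=1\}$ is strictly positive for some $\delta$, which I will take to be an integer multiple of $\bar\delta$.

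First I would establish the ``full-energy'' estimate coming from Kalman observability alone. For the LTI pair $(A,H)$, observability is equivalent to positive definiteness of $G(\delta):=\int_0^\delta e^{A^\top\tau}H^\top H e^{A\tau}\,d\tau$ for every $\delta>0$: if $x^\top G(\delta)x=\int_0^\delta|He^{A\tau}x|^2\,d\tau=0$, then $He^{A\tau}x\equiv0$, and differentiating at $\tau=0$ forces $x\in\bigcap_k\ker(HA^k)=\{0\}$. By compactness of the unit sphere this yields $\int_0^\delta|He^{A\tau}x|^2\,d\tau\ge\kappa|x|^2$ with $\kappa=\lambda_{\min}(G(\delta))>0$. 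Thus the signal $w(\tau):=He^{A\tau}x$ always carries energy proportional to $|x|^2$; observability guarantees that the true signal is never degenerate.

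The second ingredient is the persistence of excitation of $\Sigma$, which must be used to guarantee that the measured signal $\Sigma(t+\tau)w(\tau)$ retains a definite fraction of the energy of $w$ over each window of length $\bar\delta$, uniformly in $t$. The difficulty — and the step I expect to be the main obstacle — is that $\Sigma$ provides only an averaged lower bound $M(t,t+\bar\delta)\ge\bar\mu I$ over each window, so a priori $w(\tau)$ could slip into the (time-varying) kernel of $\Sigma$ and escape detection. This is exactly where the hypothesis that all eigenvalues of $A$ are real becomes essential: each component of $w$ is an exponential polynomial $\sum_j p_j(\tau)e^{\lambda_j\tau}$ with real exponents $\lambda_j$, and such functions form an extended Chebyshev system, so $w$ is non-oscillatory — its near-zero set has uniformly bounded measure and its direction cannot wind around. (For complex eigenvalues a rotating $w$ could indeed hide in a rotating kernel of a persistently exciting $\Sigma$, and the conclusion would fail.) Combining this non-oscillation with the PE mass on each window produces a positive lower bound for $\int|\Sigma w|^2$ there.

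Finally I would assemble the pieces and handle uniformity in $t$ by a contradiction/compactness argument: if no uniform bound held, I would extract sequences $t_k$ and $x_k\to\bar x$ with $|\bar x|=1$ along which the measured energy tends to zero, pass to the limit to obtain a fixed nonzero exponential polynomial $w=He^{A\,\cdot}\bar x$ whose energy is annihilated by the limiting excitation, and contradict the PE lower bound using the non-oscillation property established above. Summing the per-window bounds over the $\delta/\bar\delta$ windows and invoking the full-energy estimate then delivers \eqref{eq:positivegramian} with an explicit $\mu$ depending on $\kappa$, $\bar\mu$, and the eigenstructure of $A$.
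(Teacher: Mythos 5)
The paper never actually proves this lemma: it is imported, statement and all, from \cite{hamel2017position}, so there is no in-paper proof to compare against and your attempt must be judged on its own. Its skeleton is the right one, and matches the standard route in this literature: reduce the Gramian to $\frac{1}{\delta}\int_0^\delta|\Sigma(t+\tau)He^{A\tau}x|^2\,d\tau$; use Kalman observability of $(A,H)$ so that $w(\tau)=He^{A\tau}x$ is nondegenerate for $x\neq 0$; use the real-spectrum hypothesis to control the geometry of $w$ (your remark that with complex eigenvalues a rotating $w$ can hide in the rotating kernel of a PE $\Sigma$, e.g.\ $\Sigma(\tau)=\pi_{w(\tau)}$, is exactly the right counterexample); close by compactness/contradiction. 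The form of ``non-oscillation'' you actually need is worth making explicit: writing $w(\tau)=\sum_{j,p}\tau^{p}e^{\lambda_j\tau}b_{jp}$ with all $\lambda_j$ real, the lexicographically dominant nonzero term gives $w(\tau)=\tau^{p_*}e^{\lambda_*\tau}\bigl(b_*+o(1)\bigr)$, hence $|w|$ is eventually nonvanishing and $w/|w|$ converges to a fixed unit vector $u$; this settling of the direction is what allows a \emph{fixed} vector to be fed into the PE inequality.

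Two of your steps, as written, are genuinely broken. First, the claim that PE plus non-oscillation yields a positive lower bound for $\int|\Sigma w|^2$ on \emph{each} window of length $\bar\delta$, and the closing plan of ``summing the per-window bounds'' into an explicit $\mu$, do not hold up: windows containing near-zeros of $w$, or lying in the transient before the direction of $w$ settles, carry no useful bound, and, worse, the settling time and the floor $\min|w|$ on the good window depend on $x$ and degenerate as $x$ approaches the set where the dominant coefficient $b_*(x)$ vanishes (example: $A=\diag(0,-1)$, $H=I_2$, $x=(\varepsilon,1)^\top$ settles only after $\tau\sim\ln(1/\varepsilon)$ and with floor $\sim\varepsilon$). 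So there is no uniform-in-$x$ per-window bound to sum; uniformity over $(t,x)$ is the crux of the lemma, it comes out non-constructively, and $\delta$ cannot be fixed in advance as a prescribed multiple of $\bar\delta$: in the contradiction one must let $\delta_k\to\infty$ so that the ($\bar x$-dependent) good window eventually fits inside $[0,\delta_k]$. Second, your ``limiting excitation'' is unjustified: translates $\Sigma(t_k+\cdot)$ of a merely bounded continuous $\Sigma$ need not converge as functions (no uniform continuity is available); one would have to pass to weak-$*$ limits of the measures $\Sigma(t_k+\tau)\,d\tau$ and check that both PE and the annihilation identity survive. The cleaner repair is to take no limit in $\Sigma$ at all: extract only $x_k\to\bar x$, fix a window $[a,a+\bar\delta]$ on which $|\bar w|\ge c>0$ and $\bigl|\bar w/|\bar w|-u\bigr|\le\epsilon$ with $\epsilon<\bar\mu/(4M)$, $M:=\sup_t\|\Sigma(t)\|$, note that $w_k\to\bar w$ uniformly there, and then, for all large $k$, using $|\Sigma v|\ge v^\top\Sigma v/|v|$, positive semidefiniteness of $\Sigma$ (true in the paper's application, where $\Sigma$ is a projector, and assumed in \cite{hamel2017position}), and Cauchy--Schwarz,
\[
\int_a^{a+\bar\delta}\bigl|\Sigma(t_k+\tau)w_k(\tau)\bigr|^2\,d\tau
\;\ge\;
\frac{1}{\bar\delta}\left(\frac{c}{2}\,u^\top\!\left(\int_{t_k+a}^{t_k+a+\bar\delta}\Sigma(s)\,ds\right)\!u-\frac{c}{2}\,4M\epsilon\bar\delta\right)^{2}
\;\ge\;
\frac{c^{2}\bar\delta}{4}\bigl(\bar\mu-4M\epsilon\bigr)^{2},
\]
where the PE hypothesis is applied on the \emph{shifted} window to the \emph{fixed} vector $u$. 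This is a positive constant independent of $k$, contradicting $\int_0^{\delta_k}|\Sigma(t_k+\tau)w_k|^2\,d\tau\to 0$ along the sequence furnished by the negation of \eqref{eq:positivegramian}. With these two repairs your argument closes; as proposed, it does not.
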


%{\color{blue}
\begin{definition}[Almost-global asymptotic stability, \cite{angeli2004almost}]
    An equilibrium point of a nonlinear system is almost-globally asymptotically stable (AGAS) if it is locally stable and its basin of attraction includes all initial conditions with the exception of a set of measure zero.
\end{definition}
%}

\section{Problem Description} \label{sec:problem_desc}
This work considers the visual–inertial odometry problem of estimating the pose and linear velocity of a vehicle equipped with an IMU (gyroscope and accelerometer) rigidly fixed to a monocular camera.

Let $\{\calI\}$ be an inertial frame of reference, and $\{\calB\}$ a body frame attached to the camera-IMU system, which is assumed to coincide with the rigid body’s center of mass. The matrix $R \in \SO(3)$ represents the rotation of frame $\{\calB\}$ with respect to frame $\{\calI\}$. The position and linear velocity of the rigid body, expressed in the inertial frame $\{\calI\}$, are denoted by $\bm{\xi} \in \mathbb{R}^3$ and $\bm{v} \in \mathbb{R}^3$, respectively. 

Suppose the camera provides optical flow data from which the normalized linear velocity vector $\bm{\eta}_v \in \mathrm{S}^2$ in $\{ \calB \}$ is obtained. Then, the considered measurement model is 
\begin{equation} \label{eq:velocity_dir_measurement}   
\bm{\eta}_v = h(R, \bm{v}) := R^\top \frac{\bm{v}}{|\bm{v}|}.
%}
\end{equation}  
In this work, we propose an iterative solution to estimate $\bm{\eta}_v$ from the optical flow of a set of tracked features, which will be presented in Section \ref{sec:opt_flow_alg}. 
The IMU components provide body-fixed measurements of the angular velocity $\bm{\omega} \in \R^3$ and the linear specific acceleration $\bm{a} \in \R^3$.
The second-order kinematics of the vehicle are
\begin{align} \label{eq:dyn_inertial}
    \dot{R} &= R \bm{\omega}^\times, &
    \dot{\bm{\xi}} &= \bm{v}, &
    \dot{\bm{v}} &= R \bm{a} + \bm{g},
\end{align}
 where $\bm{g} = g e_3 \in \R^3$ is the gravitational acceleration in the inertial frame, where $e_3 = (0, 0, 1)^\top \in \mathrm{S}^2$ is the standard gravity direction, and $g \approx 9.81m/s^2$ the gravity constant.

\section{Proposed observer design} \label{sec:observer}
We consider the problem of designing an observer for the attitude $R$, the velocity $\bm{v}$, and position $\bm{\xi}$, using the input measurements $(\bm{\omega}, \bm{a})$, along with the body-frame velocity direction obtained from optical flow measurements and, if available, magnetometer outputs.
Our approach introduces a cascaded state estimation framework that respects the structure of the system and the measurement model in \eqref{eq:velocity_dir_measurement}.

\subsection{Body-frame velocity and gravity direction estimation}
Let $\bm{\xi}^\calB = R^\top \bm{\xi} \in \R^3$ and $\bm{v}^\calB = R^\top \bm{v} \in \R^3$ denote the position and linear velocity of the vehicle expressed in $\{\calB\}$, respectively. Then, their kinematics evolve as
    \begin{align} \label{eq:pos_bodyframe}
            \dot{\bm{\xi}}^\calB &= - \bm{\omega}^\times \bm{\xi}^\calB + \bm{v}^\calB, \\
            \label{eq:vel_bodyframe}
            \dot{\bm{v}}^\calB &= - \bm{\omega}^\times \bm{v}^\calB + \bm{a} + \bm{g}^\calB, 
    \end{align}
    where $\bm{g}^\calB = R^\top \bm{g}$ is the gravity vector expressed in the body frame $\{\calB\}$, which evolves according to 
    \begin{equation} \label{eq:gravity_dyn}
        \dot{\bm{g}}^\calB = - \bm{\omega}^\times \bm{g}^\calB.
    \end{equation}

The estimation of the velocity $\bm{v}^\calB$ and gravity $\bm{g}^\calB$ can be carried out independently of $R$ and $\bm{\xi}^B$. 
To decouple their dynamics from the attitude estimation, we introduce an auxiliary state $\bm{z} \in \R^3$, which serves as an estimate for $\bm{g}^\calB$ in $\{\calB\}$, and propose a Riccati observer, whose stability properties are then established.
In view of the kinematics \eqref{eq:vel_bodyframe} and  \eqref{eq:gravity_dyn}, the observer equations take the form  
    \begin{subequations} \label{eq:riccati_obs}
    \begin{align}
            &\dot{\hat{\bm{v}}}^\calB = - \bm{\omega}^\times \hat{\bm{v}}^\calB + \bm{a} + \bm{z} + \bm{\sigma}_{\bm{v}}, \\
            &\dot{\bm{z}} = - \bm{\omega}^\times \bm{z} + \bm{\sigma}_{\bm{g}},
    \end{align}
    \end{subequations}
    where the correction terms $\bm{\sigma}_v, \bm{\sigma}_z \in \R^3$ are designed to ensure that $(\hat{\bm{v}}^\calB, \bm{z}) \rightarrow (\bm{v}^\calB, \bm{g}^\calB)$.
    Define the error state $ \bm{x} = (\tilde{\bm{v}}^\calB, \tilde{\bm{z}}) = (\bm{v}^\calB - \hat{\bm{v}}^\calB, \bm{g}^\calB - \bm{z})$, it evolves according to
    \begin{align*}
        \ddt \begin{bmatrix}
            \tilde{\bm{v}}^\calB \\ \tilde{\bm{z}}
        \end{bmatrix} &= \begin{bmatrix}
        -\bm{\omega}^\times \tilde{\bm{v}}^\calB + \tilde{\bm{z}} - \bm{\sigma}_{\bm{v}} \\
       -\bm{\omega}^\times \tilde{\bm{z}}  -  \bm{\sigma}_{\bm{g}} 
    \end{bmatrix}, 
    \end{align*} 
    which can be expressed in a compact form as
    \begin{equation} \label{eq:reduced_err_sys}
    \dot{\bm{x}}  = A(t)\bm{x} - \begin{bmatrix}\bm{\sigma}_{\bm{v}} \\ \bm{\sigma}_{\bm{g}} \end{bmatrix}, \;\; A(t) = \begin{bmatrix}
        -\bm{\omega}^\times & I_3 \\ 0_{3,3} & -\bm{\omega}^\times
    \end{bmatrix} \in \R^{6\times 6}. 
    \end{equation} 

    Now, from \eqref{eq:velocity_dir_measurement}, the measured unit velocity vector is defined as $\bm{\eta}_v = \bm{v}^\calB/|\bm{v}^\calB|$, which implies $\pi_{\bm{\eta}_v} \bm{v}^\calB = 0 $. Consequently, one can write 
    $$ - \pi_{\bm{\eta}_v} \hat{\bm{v}}^\calB = \pi_{\bm{\eta}_v} (\bm{v}^\calB - \hat{\bm{v}}^\calB). $$ 
    Defining the output as $\bm{y}(t) := - \pi_{\bm{\eta}_v} \hat{\bm{v}}^\calB$, we obtain the system output equation in standard form \eqref{eq:ltv_sys}
    \begin{align}
        \bm{y}(t) &= C(t) \bm{x}, &
        C(t) &= \begin{bmatrix}
        \pi_{\bm{\eta}_v} & 0_{3,3} 
        \end{bmatrix} \in \R^{3\times 6}. 
    \end{align}
    
    Then, the correction terms are chosen as
    \begin{align} \label{eq:corrections_riccati}
%        \begin{cases}
             \bm{\sigma}_{\bm{v}} &= K_{\bm{v}} \bm{y}, &
             \bm{\sigma}_{\bm{g}} &= K_{\bm{g}} \bm{y},
%        \end{cases}
    \end{align}
    where the gain matrix is given by $K = \left[
        K_{\bm{v}}^\top \; K_{\bm{g}}^\top \right]^\top = PC^\top D$, with $P$ solution to the continuous Riccati equation
    \begin{equation} \label{eq:cre}
        \dot{P} = AP + PA^\top - PC^\top D CP + S 
    \end{equation}
    where $S \in \mathbb{S}_+(6)$ and $D \in \mathbb{S}_+(3)$ are bounded continuous symmetric positive definite matrix-valued functions.
    The closed-loop error system evolves as
    \begin{equation}
        \dot{\bm{x}} = (A(t) - K(t) C(t)) \bm{x}.
    \end{equation}  

\begin{remark}
In stochastic Kalman filtering, the matrices $S$ and
$D^{-1}$ involved in the CRE \eqref{eq:cre} are chosen as the covariance matrices of Gaussian noise terms added to the dynamics and output, respectively, to ensure ``optimality'' in the sense of minimizing the estimation error covariance. 
In a noise-free setting, the existence of bounded, well-conditioned solutions to the CRE \eqref{eq:cre} relies on the uniform
observability of the pair $(A, C)$ and choosing $S$ and $D$ as (tunable) SPD matrices. This ensures the existence of an exponentially decreasing Lyapunov function $V(\bm{x}) = \bm{x}^\top P^{-1}(t)\bm{x}$, which guarantees exponential error convergence (see \cite{hamel2017position} for details).
\end{remark}

  \begin{lemma} \label{uniform_obs_lemma}
  Assume that the velocities $\bm{v}^\calB(t)$ and $\bm{\omega}(t)$ are continuous and uniformly bounded. Moreover, 
        assume that the inertial-frame velocity direction $\bm{\eta}_v^\calI := R \bm{\eta}_v \in \S^2$ is persistently exciting in the sense that there exist $\delta, \bar{\mu} > 0$ such that, for all $t \geq 0$,
    \begin{equation} \label{eq:persistent_excitation}
        \frac{1}{\delta} \int_t^{t+\delta} \pi_{\bm{\eta}_v^\calI} d \tau > \bar{\mu} I_3.
     \end{equation}
     Then, the pair $(A(t), C(t))$ is uniformly observable, and the equilibrium $\bm{x} = 0$ is globally exponentially stable (GES). %\tarekB{}
    \end{lemma}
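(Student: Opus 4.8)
The plan is to eliminate the time-varying $-\bm{\omega}^\times$ blocks in $A(t)$ through a rotation-valued change of coordinates, thereby reducing the uniform-observability question to a \emph{constant} pair for which Lemma~\ref{lemma:uniform_obs} applies verbatim. Concretely, I would introduce the bounded orthogonal transformation $T(t) = \diag(R, R) \in \R^{6 \times 6}$ and the new error $\bm{X} = T(t)\bm{x} = (R\tilde{\bm{v}}^\calB, R\tilde{\bm{z}})$. Differentiating and using $\dot{R} = R\bm{\omega}^\times$ together with \eqref{eq:reduced_err_sys}, the $\bm{\omega}$-dependence cancels and one obtains $\dot{\bm{X}} = \bar{A}\bm{X}$ with the constant nilpotent matrix $\bar{A} = \begin{bmatrix} 0_{3,3} & I_3 \\ 0_{3,3} & 0_{3,3} \end{bmatrix}$, whose eigenvalues are all zero and hence real. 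For the output, the identity $\bm{\eta}_v = R^\top \bm{\eta}_v^\calI$ yields $\pi_{\bm{\eta}_v} = R^\top \pi_{\bm{\eta}_v^\calI} R$, so that $C(t) = \bar{C}(t)\, T(t)$ with $\bar{C}(t) = \begin{bmatrix} R^\top \pi_{\bm{\eta}_v^\calI} & 0_{3,3} \end{bmatrix}$.

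The key observation is then that $\bar{C}^\top \bar{C} = \diag(\pi_{\bm{\eta}_v^\calI}, 0_{3,3})$, because $R R^\top = I_3$ and the projector is idempotent ($\pi_{\bm{\eta}_v^\calI}^2 = \pi_{\bm{\eta}_v^\calI}$); the troublesome factor $R$ disappears. Consequently the observability Gramian of $(\bar{A}, \bar{C})$ coincides with that of $(\bar{A}, \hat{C})$ where $\hat{C}(t) = \Sigma(t) H$, with $\Sigma(t) = \pi_{\bm{\eta}_v^\calI}$ and $H = \begin{bmatrix} I_3 & 0_{3,3}\end{bmatrix}$. I would then check the hypotheses of Lemma~\ref{lemma:uniform_obs}: $\bar{A}$ is constant with real eigenvalues; the pair $(\bar{A}, H)$ is Kalman observable since $\begin{bmatrix} H^\top & (H\bar{A})^\top \end{bmatrix}^\top = I_6$ has full rank; and $\Sigma = \pi_{\bm{\eta}_v^\calI}$ is persistently exciting, which is precisely the standing assumption \eqref{eq:persistent_excitation}. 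The lemma then delivers a uniform lower bound on the Gramian of $(\bar{A}, \bar{C})$.

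To transfer this back to the original pair, I would use that $T(t)$ is a Lyapunov transformation: from $\Phi(s,t) = T^\top(s)\bar{\Phi}(s,t) T(t)$ one gets $C(s)\Phi(s,t) = \bar{C}(s)\bar{\Phi}(s,t) T(t)$, hence $W(t,t+\delta) = T^\top(t)\, \bar{W}(t,t+\delta)\, T(t)$, and orthogonal conjugation preserves the bound $\mu I_6$. Thus $(A(t), C(t))$ is uniformly observable. Finally, GES follows from the standard Riccati-observer argument recalled in the Remark: since $A(t)$ and $C(t)$ are bounded (by boundedness of $\bm{\omega}$ and $\|\pi_{\bm{\eta}_v}\| \le 1$) and $(A,C)$ is uniformly observable, the CRE \eqref{eq:cre} with bounded SPD $S, D$ admits a unique solution $P(t)$ that is bounded and well-conditioned, $0 < p_1 I_6 \le P(t) \le p_2 I_6$; then $V(\bm{x}) = \bm{x}^\top P^{-1}(t)\bm{x}$ satisfies $\dot{V} \le -c\, V$ along the closed loop, and linearity of the error system upgrades this to global exponential stability.

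The main obstacle is the first step: recognizing the rotation-valued coordinate change that simultaneously freezes $A(t)$ into the nilpotent $\bar{A}$ and, via idempotency of the projector, converts the rotated output into exactly the persistently exciting quantity $\pi_{\bm{\eta}_v^\calI}$ assumed in \eqref{eq:persistent_excitation}. Once this alignment is found, the argument reduces to a rank check and a direct invocation of Lemma~\ref{lemma:uniform_obs}; the remaining care is bookkeeping to confirm that the stated excitation condition is exactly the form the lemma consumes and that the Gramian conjugation is norm-preserving.
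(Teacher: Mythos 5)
Your proposal is correct and follows essentially the same route as the paper: the block-rotation change of coordinates $T(t)=\diag(R,R)$ (the paper's $\bar{R}=I_2\otimes R$) freezing $A(t)$ into the nilpotent $\bar{A}$, the factorization of the rotated output into $\Sigma(t)H$ with $\Sigma(t)=\pi_{\bm{\eta}_v^\calI}$ and $H=[\,I_3\;\;0_{3,3}\,]$, the invocation of Lemma~\ref{lemma:uniform_obs}, the transfer of the Gramian bound back through orthogonal conjugation, and the standard Riccati/Lyapunov argument for GES. The only difference is cosmetic bookkeeping (you write $C=\bar{C}T$ and cancel the rotation via $\bar{C}^\top\bar{C}$, while the paper conjugates $C$ directly), so no further comparison is needed.
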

    \begin{proof}
    Let us compute the state transition matrix associated with the state matrix $A(t)$. Observe that $A(t) = \bar{A} + S(t)$, where $S(t) = - I_2 \otimes \bm{\omega}^\times$ and 
    $$ \bar{A} = \begin{bmatrix}
        0_{3,3} & I_3 \\ 0_{3,3} & 0_{3,3}
    \end{bmatrix}. $$
    Consider the change of variable $\bar{\bm{x}} = \bar{R}(t) \bm{x}$, where $\bar{R} = I_2 \otimes R(t)$, for any $t \geq 0$. 
    Then by direct differentiation one obtains $\dot{\bar{\bm{x}}} = \bar{A} \bar{\bm{x}}$, 
    which implies the state transition matrix satisfies $\bar{\bm{x}}(s) = \bar{\Phi}(s,t)\bar{\bm{x}}(t)$, with $\bar{\Phi}(s,t) = \exp(\bar{A}(s-t))$.
    Consequently, the original state transition matrix satisfies, $\Phi(s,t) = \bar{R}^\top(s) \bar{\Phi}(s,t) \bar{R}(t)$.
    The observability Gramian of $(A(t), C(t))$ can then be written as $W(t, t+ \delta) = \bar{R}^\top(t) \bar{W}(t, t + \delta) \bar{R}(t)$, 
    where 
    \begin{align*}
        &\bar{W}(t, t + \delta) \\ &= {\displaystyle \frac{1}{\delta} \int_t^{t+\delta} \bar{\Phi}^\top(s,t) \bar{R}(s)C^\top(s)C(s)\bar{R}^\top(s) \bar{\Phi}(s,t) ds }.
    \end{align*}  
    Let $\bar{C}(t) = \bar{R}(t)C(t)\bar{R}^\top(t) = \left[ \pi_{\bm{\eta}_v^\calI} \; 0_{3,3} \right]$, it follows that
    \begin{align*}
        \bar{W}(t, t+ \delta) 
        &=  \frac{1}{\delta} \int_t^{t+\delta} \bar{\Phi}^\top(s,t) \bar{C}^\top(s)\bar{C}(s)\bar{\Phi}(s,t) ds.
    \end{align*}
    Thus, $\bar{W}(t, t+\delta)$ is the observability Gramian associated with the pair $(\bar{A}, \bar{C}(t))$. Notice that $\bar{C}(t)$ admits the factorization $\bar{C}(t) = \Sigma(t) H$, where $H := \left[I_3 \; 0_{3,3}  \right]$ and $\Sigma(t) := \pi_{\bm{\eta}_v^\calI}$.
    Since $\rank\left(\left[\begin{smallmatrix}
        H \\ H \bar{A}
    \end{smallmatrix}\right]\right) = 6$, the pair $(\bar{A}, H)$ is Kalman observable.
    Moreover, the persistent excitation condition \eqref{eq:persistent_excitation} ensures that $\frac{1}{\delta} \int_t^{t+\delta} \Sigma(s) ds \geq \bar{\mu}I_3$ for all $t \geq 0$. Applying Lemma \ref{lemma:uniform_obs}, we conclude that $\bar{W}(t, t + \delta) \geq \bar{\mu} I_3$, for any $t \geq 0$. Thus, the pair $(\bar{A}, \bar{C}(t))$, and by extension the pair $(A(t), C(t))$, is uniformly observable.
    This in turn guarantees the global exponential stability (GES) of system \eqref{eq:reduced_err_sys} at the equilibrium $(\tilde{\bm{v}}^\calB, \tilde{\bm{z}}) = (0,0)$.
    \end{proof}

Lemma \ref{uniform_obs_lemma} establishes a persistent excitation condition on the inertial-frame velocity direction $\bm{\eta}_v^\calI$, which requires the vehicle's motion direction to change persistently over time. 

\begin{figure}[t]
    \centering
    \includegraphics[width=.95\linewidth]{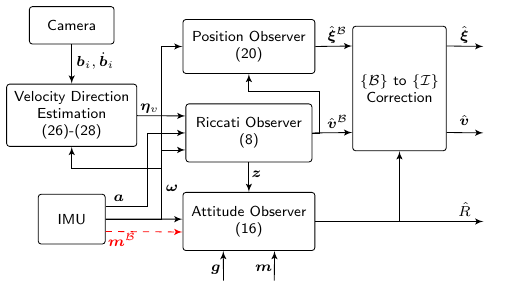}
    \caption{Diagram of the proposed optical flow-based VIO architecture with cascaded observer design.
    The velocity direction $\bm{\eta}_v$ is estimated from optical flow and, together with IMU data, is provided as input to a Riccati observer to estimate $\hat{\bm{v}}^\calB$ and $\bm{z}$. The estimate $\bm{z}$ is used for attitude reconstruction, while $\hat{\bm{v}}^\calB$ is integrated for position. The rotation estimate $\hat{R}$ transforms all estimates to the inertial frame ${ \calI }$. The magnetometer (in red) is optional and used to estimate yaw in the attitude observer.} 
    \label{fig:cascade_obs}
\end{figure}

\subsection{Attitude estimation on $\SO(3)$}
In this second stage, we use the estimated body-frame gravity $\bm{z}$ to design a complementary filter on $\SO(3)$ to estimate the attitude $R$ (up to a yaw rotation about $\bm{g}$) \cite{mahony2008nonlinear}, given that $\bm{g}$ is known and constant. 
To estimate the full attitude, an additional direction measurement is needed. While many sensors can provide such information, we assume that the IMU includes a magnetometer. The measured Earth’s magnetic field $\bm{m}^\calB \in \mathrm{S}^2$ is then modelled as
\begin{align} \label{eq:mag_model}
\bm{m}^\calB = R^\top \bm{m}, 
\end{align}
where $\bm{m} \in \mathrm{S}^2$ is the reference magnetic field direction.
Although this measurement is not required for pitch and roll estimation, we include it to illustrate how an additional known direction can aid in reconstructing the full attitude.

Let $\hat{R} \in \SO(3)$ denote the estimate of the attitude $R$. The observer is given by
\begin{equation} \label{eq:observer_R}
    %    \begin{cases}
            \dot{\hat{R}} = \hat{R} \bm{\omega}^{\times} - \bm{\sigma}_R^\times \hat{R}, \quad \hat{R}(0) \in \SO(3),
    \end{equation}
    where the correction term $\bm{\sigma}_R \in \R^3$ is defined as 
    \begin{equation} \label{eq:innovation_R}
        \bm{\sigma}_R = k_z (\bm{g}^\times \hat{R} \bm{z}) + k_m (\bar{\bm{m}}^\times \hat{R} \bar{\bm{m}}^{\calB}).
    \end{equation}
    with $\bar{\bm{m}} = \pi_{\bm{g}}\bm{m}$, $\bar{\bm{m}}^\calB = \bar{\pi}_{\bm{z}}\bm{m}^\calB$, $k_z >0$, $k_m \geq 0$, where $\bar{\pi}_{\bm{z}}$ is used instead of the projection $\pi_{\bm{z}}$ to account for cases where $\bm{z}$ may vanish. The use of $\bar{\bm{m}}$ and $\bar{\bm{m}}^\calB$ instead of $\bm{m}$ and $\bm{m}^\calB$ ensures that the yaw estimation is decoupled from roll and pitch.    Define the error $\tilde{R} = \hat{R} R^\top \in \SO(3)$. Differentiating $\tilde{R}$, we obtain
        \begin{align} 
        \dot{\tilde{R}} &=  (\hat{R}\bm{\omega}^\times - \bm{\sigma}_R^\times \hat{R}) R^\top - \hat{R} \bm{\omega^\times} R^\top, \notag \\ \label{eq:R_err_dyn}
        &= - \left( k_z \bm{g}^\times \hat{R} \bm{z} + k_m \bar{\bm{m}}^\times \hat{R} \bar{\bm{m}}^{\calB}  \right)^\times \tilde{R}, 
        `\end{align}%    
\begin{theorem}
Consider the Riccati observer \eqref{eq:riccati_obs} with corrections \eqref{eq:corrections_riccati} and the attitude observer \eqref{eq:observer_R} with correction \eqref{eq:innovation_R}.
If the pair $(A(t), C(t))$ is uniformly observable according to Lemma \ref{uniform_obs_lemma}.
    Then: 
    \begin{itemize}
        \item[(1)] If $k_m = 0$ (without magnetometer), 
        the errors $(\tilde{R}\bm{g}, \bm{x})$ converge to the set of equilibria $\mathcal{E}^g = (\bm{g},0) \cup (-\bm{g},0)$. Moreover, the equilibrium $(-\bm{g},0)$ is unstable, and the equilibrium $(\bm{g},0)$ is almost-globally asymptotically stable.

        \item[(2)] If $k_m > 0$ (magnetometer available), the errors $(\tilde{R}, \bm{x})$ converge to set of equilibria $\mathcal{E} = \mathcal{E}_s \cup \mathcal{E}_u$, where $\mathcal{E}_s = \{(I_3, 0)\}$ and
        $$ \mathcal{E}_u = \{(U \Lambda U^\top,0) \mid \Lambda = \diag(1, -1, -1), U \in \SO(3)\}. $$
        Moreover, the set of equilibria $\mathcal{E}_u$ is unstable, and the singleton set $\mathcal{E}_s$ is almost-globally asymptotically stable.
    \end{itemize}
\end{theorem}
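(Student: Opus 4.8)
The plan is to exploit the cascade structure of the interconnection. By Lemma \ref{uniform_obs_lemma} the Riccati error $\bm{x} = (\tilde{\bm{v}}^\calB, \tilde{\bm{z}})$ is GES, so $\tilde{\bm{z}}(t) \to 0$ exponentially and enters the attitude observer only as a vanishing perturbation. Writing $\bm{z} = \bm{g}^\calB - \tilde{\bm{z}}$ and using $\hat{R}\bm{g}^\calB = \hat{R}R^\top \bm{g} = \tilde{R}\bm{g}$, the gravity correction becomes $\hat{R}\bm{z} = \tilde{R}\bm{g} - \hat{R}\tilde{\bm{z}}$, and similarly for the magnetometer term. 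Substituting into \eqref{eq:R_err_dyn} splits the attitude error dynamics into a \emph{nominal} part driven by $\tilde{R}$ alone (obtained by setting $\tilde{\bm{z}}=0$) plus a perturbation proportional to $\tilde{\bm{z}}$. First I would analyze the nominal system, then transfer the conclusions to the full interconnection, using that $\SO(3)$ is compact so all trajectories are bounded and no finite escape can occur.

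For the nominal system in Case (1), I would reduce to the sphere via $\bm{w} := \tilde{R}\bm{g}$. Using $\dot{\bm{w}} = \dot{\tilde{R}}\bm{g}$ together with the double cross-product identity gives $\dot{\bm{w}} = k_z\big(|\bm{g}|^2 \bm{g} - (\bm{g}^\top\bm{w})\bm{w}\big)$, where $|\bm{w}| = |\bm{g}|$ is conserved. With the Lyapunov function $\mathcal{L} = |\bm{g}|^2 - \bm{g}^\top\bm{w}$ one obtains $\dot{\mathcal{L}} = -k_z\big(|\bm{g}|^4 - (\bm{g}^\top\bm{w})^2\big) \leq 0$, vanishing only at $\bm{w} = \pm\bm{g}$; LaSalle then yields convergence to $\mathcal{E}^g$. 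Linearizing $\dot{\bm{w}}$ at $\bm{w} = -\bm{g}$ exhibits eigenvalues with positive real part (an unstable equilibrium), while the linearization at $\bm{w} = \bm{g}$ is Hurwitz, so its basin of attraction excludes only the measure-zero stable manifold of $-\bm{g}$, giving AGAS of $(\bm{g},0)$.

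For the nominal system in Case (2), I would use a composite Lyapunov function built from both projected directions, e.g. $\mathcal{L} = k_z\big(|\bm{g}|^2 - \bm{g}^\top\tilde{R}\bm{g}\big) + k_m\big(|\bar{\bm{m}}|^2 - \bar{\bm{m}}^\top\tilde{R}\bar{\bm{m}}\big)$, where the projections $\bar{\bm{m}}=\pi_{\bm{g}}\bm{m}$ and $\bar{\bm{m}}^\calB$ are precisely what decouple the yaw correction from roll and pitch (in the nominal case $\bar{\bm{m}}^\calB = R^\top\bar{\bm{m}}$ up to scaling). Showing $\dot{\mathcal{L}} \leq 0$ with equality only on the set $\mathcal{E}$, and then linearizing \eqref{eq:R_err_dyn} about each configuration, classifies $\mathcal{E}_s = \{(I_3,0)\}$ as locally exponentially stable and each $U\Lambda U^\top \in \mathcal{E}_u$ (a rotation by $\pi$) as a saddle with a nontrivial unstable subspace. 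Since the union of the stable manifolds of the points in $\mathcal{E}_u$ has measure zero, almost every trajectory converges to $\mathcal{E}_s$.

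The hard part will be upgrading these nominal statements to the full interconnected system in the presence of the vanishing coupling $\hat{R}\tilde{\bm{z}}$, because almost-global asymptotic stability is not automatically robust to perturbations. I would establish global attractivity of the equilibrium set by a converging-input/cascade argument: the perturbation is integrable since $\tilde{\bm{z}}$ decays exponentially and the unperturbed flow is bounded on the compact manifold $\SO(3)$. For the local behaviour I would invoke a reduction-type theorem for cascades: local exponential stability of the nominal attractor (at $(\bm{g},0)$, resp. $(I_3,0)$) combined with GES of the $\bm{x}$-subsystem gives local exponential stability of the interconnection, while the hyperbolic instability of $(-\bm{g},0)$ (resp. $\mathcal{E}_u$) persists under the vanishing perturbation. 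The delicate point is to verify that this coupling does not enlarge the set attracted to the unstable equilibria beyond measure zero; the exponential decay of $\tilde{\bm{z}}$ together with the hyperbolicity established by the linearizations is exactly what guarantees this, so the exceptional set of the full system remains the measure-zero preimage of the unstable equilibria.
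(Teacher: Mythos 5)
Your proposal is correct and follows the same architecture as the paper's proof: the identical cascade decomposition (attitude error dynamics driven by the exponentially vanishing Riccati error $\bm{x}$, obtained exactly as you do by writing $\hat{R}\bm{z} = \tilde{R}\bm{g} - \hat{R}\tilde{\bm{z}}$), the same reduction $\tilde{\bm{g}} = \tilde{R}\bm{g}$ with the same Lyapunov function $\mathcal{L} = |\bm{g}|^2 - \bm{g}^\top \tilde{\bm{g}}$ for the gravity-only case (your $\dot{\mathcal{L}} = -k_z\bigl(|\bm{g}|^4 - (\bm{g}^\top \bm{w})^2\bigr)$ is the paper's $-k_z|\bm{g}^\times \tilde{\bm{g}}|^2$ rewritten), and the Mahony-type two-direction analysis for the magnetometer case, which the paper delegates to \cite{mahony2008nonlinear} and \cite{van2023synchronous} rather than writing out your composite Lyapunov function and linearizations. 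The one place you genuinely diverge is the final, delicate step of transferring almost-global stability from the nominal attitude system to the interconnection. The paper does this by citation: GES of the $\bm{x}$-subsystem plus \cite[Proposition 2]{angeli2010stability} yields that the attitude subsystem is almost globally ISS with respect to the input $\bm{x}$, and \cite[Theorem 2]{angeli2004almost} then gives AGAS of the cascade. You instead sketch the underlying mechanism from first principles: exponentially decaying (hence integrable) perturbation, boundedness on the compact manifold $\SO(3)$, local exponential stability of the nominal attractor, and hyperbolicity of the unstable equilibria preventing the exceptional set from growing beyond measure zero. These are precisely the hypotheses that Angeli's almost-global ISS framework packages, and your identification of this as ``the hard part'' is accurate; what your route buys is self-containedness, while the paper's buys brevity and rigor. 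Be aware, though, that your key persistence claim --- that the set of initial conditions attracted to the unstable equilibria remains measure zero under a time-varying vanishing perturbation --- is stated, not proved; making it rigorous requires a nonautonomous stable-manifold-type argument (or Angeli--Praly's result itself), so in a full write-up you should either supply that argument or invoke the same references the paper does.
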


    \begin{proof}
    Relation \eqref{eq:R_err_dyn} can be rewritten in terms of $\tilde{\bm{z}}$ as 
\begin{align*}
        \dot{\tilde{R}} &= - \left( k_z \bm{g}^\times \hat{R} \bm{g}^\calB - k_z \bm{g}^\times \hat{R} \tilde{\bm{z}} + k_m \bar{\bm{m}}^\times \hat{R} \bar{\bm{m}}^{\calB} \right)^\times \tilde{R}. %
    \end{align*}
    From Lemma \ref{uniform_obs_lemma}, it follows that $\bm{z} \rightarrow R^\top \bm{g}$ exponentially, which implies that $\bar{\bm{m}}^\calB \rightarrow \bar{\bar{\bm{m}}}^\calB$, with $\bar{\bar{\bm{m}}}^\calB = \bar{\pi}_{\bm{g}^\calB}\bm{m}^\calB$.
    Moreover, one can show that $\bar{\bm{m}}^\calB = \bar{\bar{\bm{m}}}^\calB + O(\tilde{\bm{z}})$.
Expressing this in terms of $\bm{x}$, one obtains:
\begin{align*}
    \dot{\tilde{R}} &= - \left( k_z \bm{g}^\times \hat{R} \bm{g}^\calB + k_m \bar{\bm{m}}^\times \hat{R} \bar{\bar{\bm{m}}}^{\calB} + O(\bm{x}) \right)^\times \tilde{R}.      
         \end{align*}
    This yields the following closed-loop system
\begin{subequations} \label{eq:interconnection}
\begin{align} \label{eq:interconnection1}
\dot{\tilde{R}} &= - \left( k_z \bm{g}^\times \hat{R} \bm{g}^\calB + k_m \bar{\bm{m}}^\times \hat{R} \bar{\bar{\bm{m}}}^{\calB} + O(\bm{x}) \right)^\times \tilde{R},  \\
    \label{eq:interconnection2}
    \dot{\bm{x}} &= (A -KC) \bm{x}.
\end{align}
\end{subequations}
The above system can be seen as a cascade interconnection of a nonlinear system on $\SO(3)$ \eqref{eq:interconnection1} and the LTV system on $\R^6$ \eqref{eq:interconnection2}.
To prove the almost global stability of the interconnection system, we begin by proving that subsystem \eqref{eq:interconnection1} for $\bm{x} = 0$ is AGAS.

Proof of item 1) 
By examining subsystem \eqref{eq:interconnection1} for $\bm{x} = 0$ and $k_m = 0$, one has
$ \dot{\tilde{R}} = - ( k_z \bm{g}^\times \hat{R} \bm{g}^\calB  )^\times \tilde{R}$.
Define the reduced attitude $\tilde{\bm{g}} = \tilde{R} \bm{g}$. Then, differentiating both sides yields $\dot{\tilde{\bm{g}}} = - k_z(\bm{g}^\times \tilde{\bm{g}})^\times \tilde{\bm{g}}$.
From this, one can establish that the equilibrium points are $\tilde{\bm{g}} = \bm{g}$ and $\tilde{\bm{g}} = -\bm{g}$, which correspond to the sets $\mathcal{E}^g_s = \{\tilde{R} \in \SO(3) \mid \tilde{R}\bm{g} = \bm{g}\}$ and $\mathcal{E}^g_u = \{\tilde{R} \in \SO(3) \mid \tilde{R}\bm{g} = -\bm{g} \}$, respectively. 
Consider the candidate Lyapunov function $\mathcal{L}(\tilde{\bm{g}}) = |\bm{g}|^2 - \bm{g}^\top \tilde{\bm{g}}$, differentiating it yields $\dot{\mathcal{L}}(\tilde{\bm{g}}) = - k_z \bm{g}^\top(\bm{g}^\times \tilde{\bm{g}})^\times \tilde{\bm{g}}$, using the identity $u^\top (u^\times v)^\times v = -|u^\times v|^2$ for any $u,v \in \R^3$, we get $\dot{\mathcal{L}} = -k_z|\bm{g}^\times \tilde{\bm{g}}|^2 \leq 0$. Since $\tilde{\bm{g}} = \bm{g}$ is the only global minimum of $\mathcal{L}$, it follows that $\mathcal{E}^g_s$ is asymptotically stable. On the other hand, we have $\mathcal{L}(-\bm{g}) = 2|\bm{g}|^2$, then $\tilde{\bm{g}} = -\bm{g}$ is a maximum of $\mathcal{L}$. Moreover, one shows that $\mathcal{L}$ locally decreases for any small perturbation; this implies that the equilibrium $\mathcal{E}^g_u$ is unstable. 
The set $\mathcal{E}^g_u$ forms a single point on the 2-dimensional scaled sphere  $\S^2_{\bm{g}} := \{ u \in \R^3 \mid u^\top u = |\bm{g}|^2 \}$ while its complement is open and dense in $\S^2_{\bm{g}}$. Therefore, $\mathcal{E}^g_u$ has measure zero in $\S^2_{\bm{g}}$.
It follows that the stable equilibrium $\tilde{\bm{g}} = \bm{g}$ is AGAS. 

To complete the proof, we now examine the full interconnection system.  
Since $\eqref{eq:interconnection2}$ evolves independently of $\tilde{R}$ and is GES from Lemma \ref{uniform_obs_lemma}, there exist $c, \alpha > 0$ such that $\bm{x}$ satisfies  $|\bm{x}(t)| \leq c \exp(-\alpha t) |\bm{x}(0)|$, for all $t \geq 0$.  
Thus, $\bm{x}$ remains uniformly bounded, meaning there exists a compact set $\calS \subset \R^6$ such that $\bm{x}(t) \in \calS$ for all $t \geq 0$. 
Therefore, according to \cite[Proposition 2]{angeli2010stability}, one can conclude that subsystem \eqref{eq:interconnection1} is almost globally ISS with respect to $\tilde{R}\bm{g} = \bm{g}$ and input $\bm{x}$.
Hence, given that $\bm{x} = 0$ for system \eqref{eq:interconnection2} is GES and subsystem \eqref{eq:interconnection1} with $\bm{x} = 0$ is AGAS at $\tilde{R} \bm{g} = \bm{g}$ and almost globally ISS with respect to $\bm{x}$, it follows from \cite[Theorem 2]{angeli2004almost} that the cascaded interconnection system \eqref{eq:interconnection} is AGAS at $ (\tilde{R}\bm{g}, \bm{x})= (\bm{g}, 0)$.  

Proof of item 2) In this case, the attitude dynamics satisfy $\dot{\tilde{R}} = - ( k_z \bm{g}^\times \hat{R} \bm{g}^\calB + k_m \bar{\bm{m}}^\times \hat{R} \bar{\bar{\bm{m}}}^{\calB} )^\times \tilde{R}$. From there, it can be shown that the equilibrium sets are  $\mathcal{E}_s = \{I_3\}$ and $\mathcal{E}_u = \{ \tilde{R} = U \Lambda U^\top \mid \Lambda = \diag(1, -1, -1), U \in \SO(3)\}$ (see \cite{mahony2008nonlinear}). 
The singleton set $\mathcal{E}_s$ is the stable equilibrium, and the set $\mathcal{E}_u$ is the set of unstable equilibria \cite[Theorem 6.1]{van2023synchronous}, it consists of all 180-degree rotations, each defined by an axis on $\S^2$, which corresponds to a 2-dimensional space embedded in the three-dimensional manifold $\SO(3)$, and thus has measure zero in $\SO(3)$.
%Since $\mathcal{E}_u$ has measure zero in $\SO(3)$, 
It follows that the stable equilibrium $\tilde{R} = I_3$ is almost globally asymptotically stable for subsystem \eqref{eq:interconnection1}.
The remainder of the proof proceeds similarly to that of item (1) to conclude the almost global stability of $(\tilde{R}, \bm{x}) = (I_3, 0)$.
This completes the proof.
\end{proof}

\begin{remark}
The body-frame position estimate $\hat{\bm{\xi}}^\calB$ follows directly from integrating the dynamics \eqref{eq:pos_bodyframe} using the velocity estimate,
\begin{equation}
    \dot{\hat{\bm{\xi}}}^\calB = - \bm{\omega}^\times \hat{\bm{\xi}}^\calB + \hat{\bm{v}}^\calB.
\end{equation} %since $\hat{\bm{v}}^\calB \rightarrow \bm{v}^\calB$,
Since $\tilde{\bm{v}}^\calB \rightarrow 0$ and $\dot{\tilde R} \rightarrow 0$, it is not difficult to show that the position error $\tilde{\bm{\xi}} := \bm{\xi} - \hat R\hat{\bm{\xi}}^\calB$ satisfies $\dot{\tilde{\bm{\xi}}} \rightarrow 0$ and hence $\tilde{\bm{\xi}} \rightarrow \bm{\xi}^\star\in\mathbb{R}^3$. Therefore, since $\tilde R \rightarrow R^\star\in\mathbf{SO}(3)$, one has $\hat{\bm{\xi}}:=\hat R\hat{\bm{\xi}}^\calB \rightarrow (R^\star)^\top (\bm{\xi} - \bm{\xi}^\star)$. The position is estimated up to an unknown inertial position and an unknown orientation about $e_3$ (since $R^\star e_3=e_3$). If a magnetometer is used, the position is estimated up to an unknown inertial position only.

\end{remark}

\begin{remark}
The proposed estimation scheme can equivalently be formulated in the inertial frame. 
Define the inertial-frame estimate of the gravity vector as 
$\bm{z}^\calI := \hat{R}\bm{z}$, and recall that 
$\hat{\bm{\xi}} = \hat{R}\hat{\bm{\xi}}^\calB$ and 
$\hat{\bm{v}} = \hat{R}\hat{\bm{v}}^\calB$. 
The overall estimation dynamics can then be expressed on $SO(3)\times\R^9$ as
\begin{subequations}
\begin{align}
\dot{\hat{R}} &= \hat{R}\bm{\omega}^\times - \bm{\sigma}_R^\times \hat{R}, \\
\dot{\hat{\bm{\xi}}} &= \hat{\bm{v}} - \bm{\sigma}_R^\times \hat{\bm{\xi}}, \\
\dot{\hat{\bm{v}}} &= \bm{z}^\calI + \hat{R}\bm{a} - \bm{\sigma}_R^\times \hat{\bm{v}} + \hat{R}\bm{\sigma}_{\bm{v}}, \\
\dot{\bm{z}}^\calI &= -\bm{\sigma}_R^\times \bm{z}^\calI + \hat{R}\bm{\sigma}_{\bm{g}}.
\end{align}
\end{subequations}
Here, the correction terms $\bm{\sigma}_{\bm{v}}$ and $\bm{\sigma}_{\bm{g}}$ are given by 
\eqref{eq:corrections_riccati}, and $\bm{\sigma}_R$ is defined in \eqref{eq:innovation_R}. Unlike the body-frame formulation in Section~\ref{sec:observer}, which yields a linear Riccati 
observer decoupled from the attitude dynamics, the inertial-frame formulation results in a 
nonlinear observer where orientation and translational motion are intrinsically coupled 
through $\bm{\sigma}_R$. This explains why the body-frame formulation was preferred for the 
stability analysis. Nevertheless, the inertial-frame version can be appealing in practice, 
since all states are expressed directly in the inertial frame and no frame transformation 
from body to inertial coordinates is required at each integration step.
\end{remark}
%}

%\newpage
\section{Velocity direction computation from sparse optical flow measurements} \label{sec:opt_flow_alg}
Estimating the unit velocity vector $\bm{\eta}_v$ based on optical flow measurements is critical to implementing the observer presented in section \ref{sec:observer}. 
%{\color{red}A figure illustrating the problem will be good here as well.}
This section proposes a gradient descent algorithm that exploits optical flow data of a set of sparse image feature points to estimate $\bm{\eta}_v$.

\begin{assumption} \label{assump:stationay_points}
    The camera observes a set of $n$ landmarks $Y_1, \dots, Y_n$, that are stationary in the inertial frame. In addition, its intrinsic parameters are known.
\end{assumption}

\subsection{Optical flow representation on spherical images}

We represent optical flow on a spherical image \cite{mahony2008dynamic} by expressing each landmark projection as a bearing on the unit sphere, given by $\bm{b}_i := Y_i /|Y_i| \in \S^2$, for $i=1, \dots,n$.
This representation does not require a physical spherical camera; it can be obtained by projecting calibrated perspective coordinates onto a unit sphere image surface \cite{ma2004invitation}.

Since the bearing vectors inherit the camera's rigid motion under Assumption \ref{assump:stationay_points}, 
the time derivative $\dot{\bm{b}}_i$ (i.e., the spherical optical flow) of the $i$-th bearing $\bm{b}_i$ is given by (see \cite{mahony2008dynamic}):
\begin{equation} \label{eq:optflow_eq}
    \dot{\bm{b}}_i =  \frac{1}{|Y_i|} \pi_{\bm{b}_i} \bm{v}^\calB - \bm{\omega}^\times \bm{b}_i
\end{equation} 
where $|Y_i|$ represents the range associated with the $i$-th bearing.
Classical optical flow estimation algorithms, such as Lucas-Kanade \cite{lucas1981iterative} or Horn-Schunck \cite{horn1981determining}, can be used to estimate the motion of features across consecutive frames.

\subsection{Proposed algorithm}

The angular velocity $\bm{\omega}$ in equation \eqref{eq:optflow_eq} can be provided by the gyroscope and is thus assumed to be known. Consequently, one can subtract the rotational components from the measured flow, such that equation \eqref{eq:optflow_eq} can be rewritten as
    $\bm{s}_i = \alpha_i \pi_{\bm{b}_i} \bm{\eta}_v$, 
where $\bm{s}_i = \dot{\bm{b}}_i + \bm{\omega}^\times \bm{b}_i$ is the (measured) translational optical flow, and $\alpha_i = |\bm{v}|/|Y_i|>0$, $i=1, \dots, n$, are unknown scale factors.

Since $|\bm{s}_i| = \alpha_i |\pi_{\bm{b}_i} \bm{\eta}_v|$, it follows that, for all $i=1,\dots,n$, the vector $\bm{\eta}_v$ satisfies the following constraint
\begin{equation} \label{eq:constraint_si}
    \frac{\bm{s}_i}{|\bm{s}_i|} = \frac{\pi_{\bm{b}_i} \bm{\eta}_v}{|\pi_{\bm{b}_i} \bm{\eta}_v|},
\end{equation} 
or equivalently,
$
    |\pi_{\bm{b}_i} \bm{\eta}_v|\bm{s}_i - |\bm{s}_i| \pi_{\bm{b}_i} \bm{\eta}_v = 0. 
$ %the problem of estimating
Let $\hat{\bm{\eta}}_v \in \S^2$ be an estimate of $\bm{\eta}_v$. Then, estimating $\bm{\eta}_v$ reduces to solving the following constrained minimization problem:
\begin{equation} \label{eq:eta_min}
\hat{\bm{\eta}}_v = \arg \min_{\hat{\bm{\eta}}_v \in \S^2} \calC(\hat{\bm{\eta}}_v), 
\end{equation}
where the cost function $\calC: \S^2 \rightarrow \R_+$ is given by
\begin{align}
    \calC(\hat{\bm{\eta}}_v) &= \frac{1}{2}\sum_{i=1}^n  \left| |\pi_{\bm{b}_i} \hat{\bm{\eta}}_v|\bm{s}_i - |\bm{s}_i| \pi_{\bm{b}_i} \hat{\bm{\eta}}_v \right|^2, \notag \\
    &= \sum_{i=1}^n \left( |\bm{s}_i|^2 \hat{\bm{\eta}}_v^\top \pi_{\bm{b}_i} \hat{\bm{\eta}}_v - |\bm{s}_i| |\pi_{\bm{b}_i} \hat{\bm{\eta}}_v| \bm{s}_i^\top \pi_{\bm{b}_i} \hat{\bm{\eta}}_v \right).
    \end{align}

To solve the problem in \eqref{eq:eta_min}, we apply a gradient descent algorithm on the manifold $\S^2$, %\cite{bonnabel2013stochastic}, 
where at each iteration, the search direction is constrained to the tangent space $\tT_{\hat{\bm{\eta}}_v} \S^2$.
Define $\bm{\delta} \in \R^3$, such that $\bm{\delta}^\times \hat{\bm{\eta}}_v \in \tT_{\hat{\bm{\eta}}_v} \S^2$ describes the steepest descent direction. Computing the differential of $\calC(\hat{\bm{\eta}}_v)$ along the tangent direction $\bm{\delta}^\times \hat{\bm{\eta}}_v$ yields
%$\tD \calC(\hat{\bm{\eta}}_v)[\bm{\delta}^\times \hat{\bm{\eta}}_v]$.
 \begin{align*}
     \tD \calC(\hat{\bm{\eta}}_v)[\bm{\delta}^\times \hat{\bm{\eta}}_v] &= - \sum_{i=1}^n \biggl( 2 |\bm{s}_i|^2 \hat{\bm{\eta}}_v^\top \pi_{\bm{b}_i} 
     - |\bm{s}_i||\pi_{\bm{b}_i}\hat{\bm{\eta}}_v| \bm{s}_i^\top \\ & \qquad \qquad  - \frac{|\bm{s}_i|}{|\pi_{\bm{b}_i}\hat{\bm{\eta}}_v|} \bm{s}_i^\top \pi_{\bm{b}_i} \hat{\bm{\eta}}_v \hat{\bm{\eta}}_v^\top \pi_{\bm{b}_i}  \biggr) \hat{\bm{\eta}}_v^\times \bm{\delta}, \\
     &=  - \left\langle \grad\calC(\hat{\bm{\eta}}_v),   \hat{\bm{\eta}}_v^\times \bm{\delta} \right\rangle, \\
     &=  \left\langle \hat{\bm{\eta}}_v^\times \grad\calC(\hat{\bm{\eta}}_v),  \bm{\delta} \right\rangle.
 \end{align*}
Then, a gradient descent algorithm to minimize $\calC$ is given by the following iterative update:
\begin{align} \label{eq:update_etav}
    \hat{\bm{\eta}}_{v, k+1} &= \sign\left(\bm{\beta}_k \right)\exp( \bm{\delta}_k^{\times} ) \hat{\bm{\eta}}_{v, k}, \\
    \bm{\delta}_{k} &= - \kappa \hat{\bm{\eta}}_{v,k}^\times \grad \calC(\hat{\bm{\eta}}_{v,k}), \\
    \bm{\beta}_k &= \frac{1}{n} \sum_{i=1}^n  \frac{\bm{s}_i^\top \pi_{\bm{b}_i} \hat{\bm{\eta}}_{v, k}}{|\bm{s}_i| |\pi_{\bm{b}_i} \hat{\bm{\eta}}_{v, k}|}.
    \end{align} 
with step size $\kappa > 0$.

\begin{remark}
    The term $\sign\left(\bm{\beta}_k \right) \in \{+1, -1\}$, where $\bm{\beta}_k$ is derived using the constraints in \eqref{eq:constraint_si}, determines the appropriate hemisphere of $\S^2$ to which $\hat{\bm{\eta}}_v$ belongs. It accounts for direction flips along an axis, as the constraint $\hat{\bm{\eta}}_v \in \S^2$ introduces an ambiguity when $\bm{v}^\calB$ crosses zero. 
\end{remark}

\begin{remark}
In practice, outlier rejection is essential to ensure accurate point features and their flow. 
When outliers are prevalent, robust estimation methods like M-estimators \cite{stewart1999robust} can be introduced into the algorithm to reduce their effect. To illustrate, a robust cost function 
$\calC_r: \S^2 \rightarrow \R_+$ can be defined as:
$\calC_r(\hat{\bm{\eta}}_v) = \sum_{i=1}^n  \bm{\rho}\left( |\pi_{\bm{b}_i} \hat{\bm{\eta}}_v|\bm{s}_i - |\bm{s}_i| \pi_{\bm{b}_i} \hat{\bm{\eta}}_v \right)$, 
where $\bm{\rho}(\cdot)$ is a positive-definite function with a unique minimum at zero. Various functions are proposed in the literature, such as the Tukey and Huber functions \cite{stewart1999robust}.
\end{remark}

\section{Simulation results}\label{sec:simresults}
In this section, we provide simulation results to evaluate the performance of the observer presented in Section \ref{sec:observer}, along with the iterative algorithm proposed in Section \ref{sec:opt_flow_alg}. 

    %%%%%%%%%%%%%%%%%%%%%%%%%%%%%%%%%%%%%%%%%%%%%%%%%%%%%%%%%%%%%%%%%%%%%%%%%%%%%%%%%%%%%%%%%%%%%%%%
\subsection{Iterative algorithm simulation}
To evaluate the performance of the iterative algorithm in Section \ref{sec:opt_flow_alg}, we consider a simulation scenario where a vehicle, equipped with a monocular camera and a gyro, observes eight landmarks positioned in the inertial frame at: $ Y_1^\calI = [1,0,0]^\top$, $Y_2^\calI = [-1,0,0]^\top$, $Y_3^\calI = [0,2,0]^\top$, $Y_4^\calI = [0,-2,0]^\top, Y_5^\calI = [1,1,-0.5]^\top, Y_6^\calI = [-1,1,-0.5]^\top, Y_7^\calI = [2,1,1]^\top, Y_8^\calI = [0,-2,-1]^\top$, and follows a predefined trajectory with linear velocity: 
\begin{align*}
    \bm{v}^\calB(t) &= [0.4 \cos(0.2t), -0.4 \sin(0.4t), -0.5 \sin(t)]^\top, 
    \end{align*}
    and measured angular velocity:
    \begin{align*}
    \bm{\omega}(t) &= [0.1 \cos(0.1t); 0.05 \sin(0.15t); 0.05  \sin(0.1 t)]^\top.
\end{align*} 

Measurements are provided at a rate of $50$Hz. At each time step, the bearing vectors are computed as the projective spherical coordinates of the landmarks, and are corrupted by zero-mean Gaussian noise with a standard deviation of $0.1$. 
The algorithm performs $N$ gradient descent iterations per time step using optical flow measurements from a subset of the landmarks computed using  \eqref{eq:optflow_eq}, with initial estimate set to $\hat{\bm{\eta}}_v(0) = e_3$ and step size $\kappa = 5$.
\begin{figure}[h]
    \centering
    \includegraphics[width=1.05\linewidth]{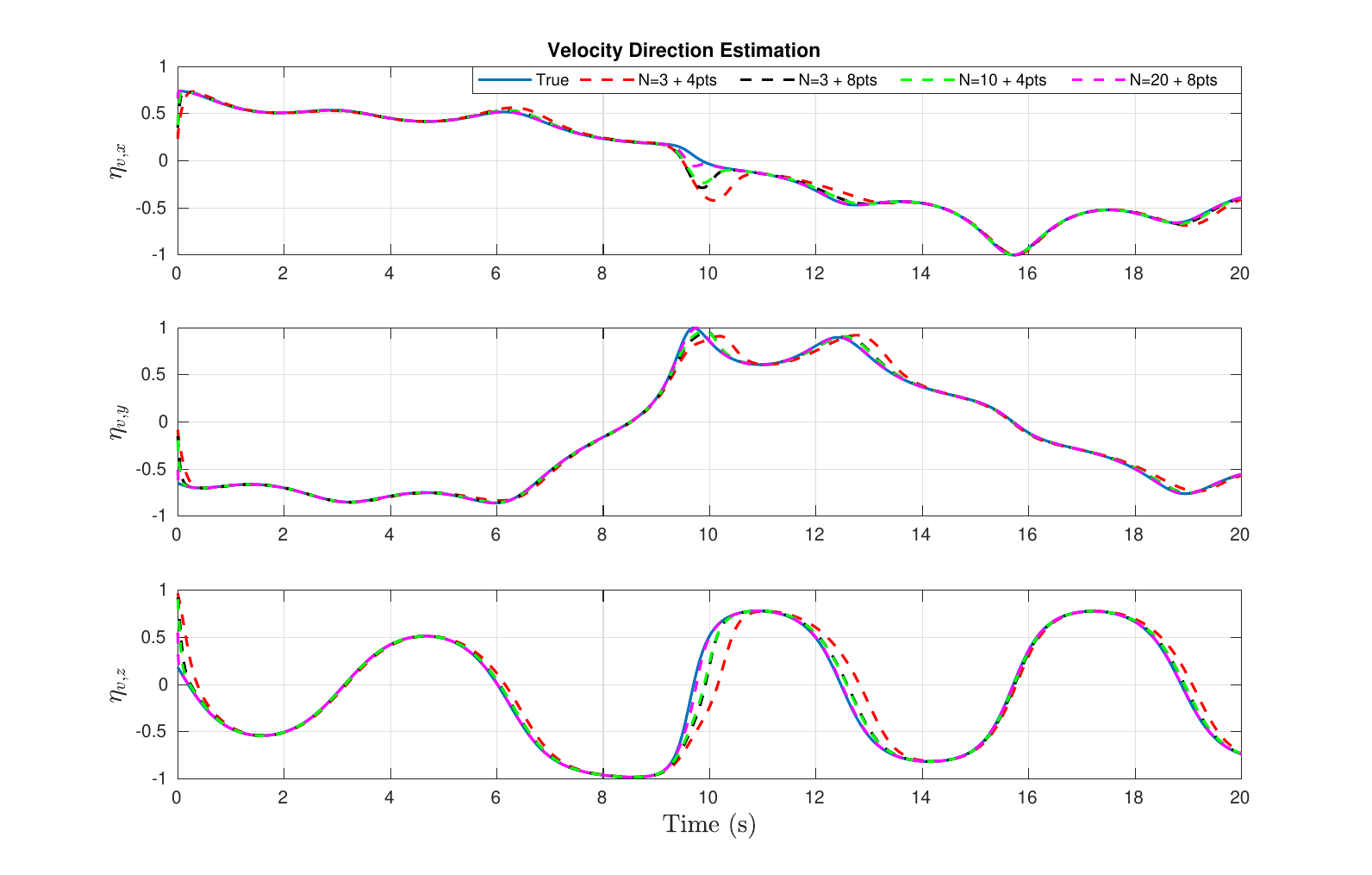}
    \vspace{-.4cm}
    \caption{Estimated velocity direction with $N=3$ iterations per time step using $4$ optical flow measurements (red) and $8$ measurements (black), $N=10$ with $4$ measurements (green), and $N=20$ with $8$ measurements (magenta), compared to the true velocity direction (blue).}
    \label{fig:etav_alg_sim}
\end{figure}
\vspace{-.2cm}
\begin{figure}[h]
    \centering
    \includegraphics[width=1.05\linewidth]{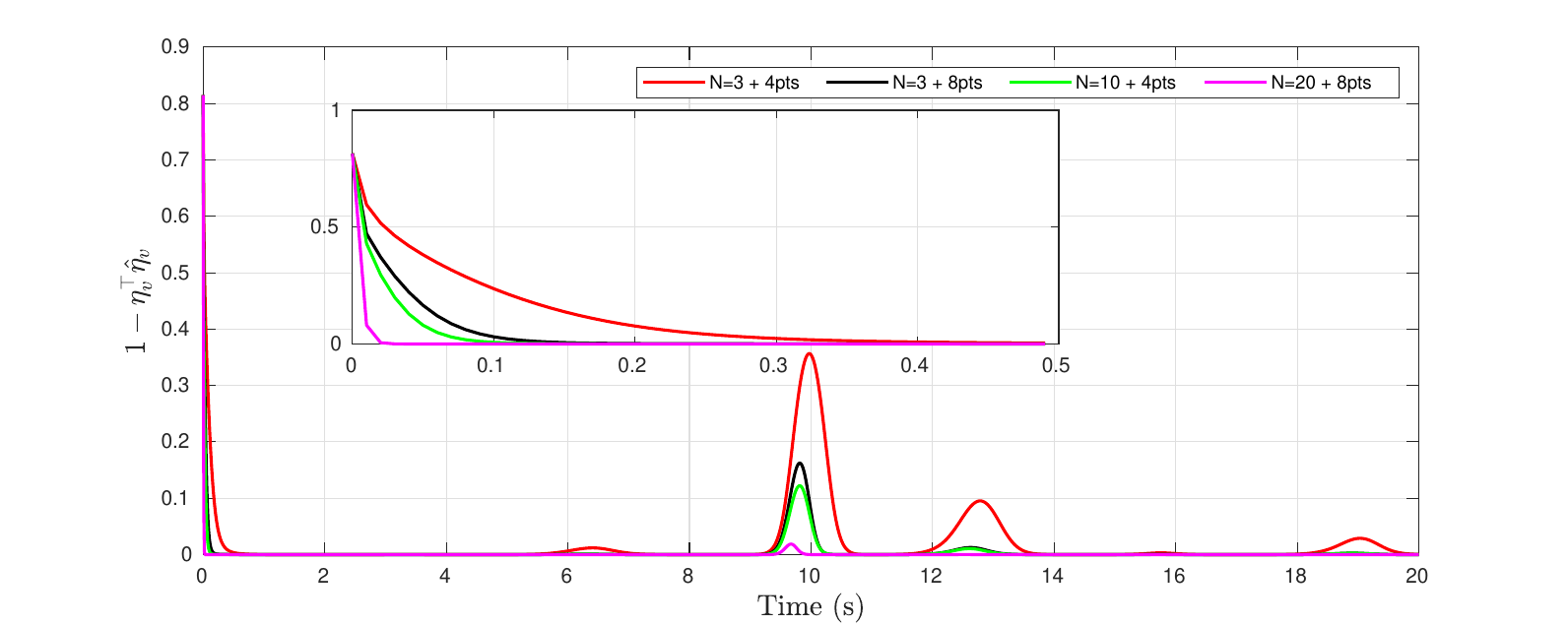}
    \vspace{-.4cm}
    \caption{Estimation errors $1-\hat{\bm{\eta}}^\top \bm{\eta}_v$ for $N=3$ iterations per time step using $4$ optical flow measurements (red) and $8$ measurements (black), $N=10$ with $4$ measurements (green), and $N=20$ with 8 measurements (magenta).}
    \label{fig:etav_alg_err}
\end{figure}

Figure \ref{fig:etav_alg_sim} show the estimated velocity direction components obtained through the iterative gradient descent algorithm for different settings: $N = 3$ with $4$ optical flow measurements, $N = 3$ with $8$, $N= 10$ with $4$, and $N = 20$ with $8$, compared to the true velocity direction $\bm{\eta}_v = \bm{v}^\calB/|\bm{v}^\calB|$.
Figure \ref{fig:etav_alg_err} presents the corresponding estimation errors given by $1 - \bm{\eta}_v^\top \hat{\bm{\eta}}_v$.
One can easily observe that the proposed algorithm demonstrates strong performance and that its accuracy improves as $N$ or the number of optical flow measurements increases. In practice, choosing $N$ and the number of measurements involves a trade-off between the desired accuracy and the algorithm's computational cost.

\subsection{Observer simulation}
To evaluate the performance of the cascaded observer, we simulated a vehicle equipped with a camera-IMU system moving in 3D space prescribed with angular velocity and linear acceleration:
\begin{align*}
    \bm{\omega} &= [0.15 \sin(0.8 t + \pi), 0.1 \sin(t), 0.05 \sin(0.1t + \pi/3)], \\
    \bm{a} &= R^\top [-5 \cos(3t), -5\sin(4t), 9.81 + 5 \sin(4t)].
\end{align*}

A Monte Carlo simulation with $30$ runs is performed, with initial estimates randomly sampled from Gaussian distributions.
The initial estimates for the body-frame velocity and gravity vectors are normally distributed around $\hat{\bm{v}}^\calB(0) = [0.8, -1.5, 0.5]^\top$ and $\bm{z}(0) = [1.4, 0.8, -8.81]^\top$, with standard deviations of $3$ and $1.5$ per axis, respectively.  The initial orientation estimates are distributed normally around $\hat{R}(0) = I_3$ with a standard deviation of $15^\circ$ per axis.
The parameters set as: $S = 30 I_6$, $D = 5 \pi_{\bm{\eta}_v}$, $k_z = 1$, $k_m = 0.1$. The constant vector $\bm{m}$ is set to $\bm{m} = [0, 1/\sqrt{2}, 1/\sqrt{2}]^\top$. 

The velocity direction is estimated using the proposed iterative algorithm based on the optical flow measurements of eight landmarks with $N = 20$ iterations.
The IMU measurements $\bm{\omega}$, $\bm{a}$ and $\bm{m}^\calB$ are corrupted with zero-mean Gaussian noise with standard deviations $0.05$, $0.05$ and $0.2$, respectively. 

Figures \ref{fig:vB_estimation} and \ref{fig:gB_estimation} 
illustrate the estimated and true body-frame linear velocity and gravity vectors, along with their respective estimation errors $|\bm{v}^\calB - \hat{\bm{v}}^\calB|$ and $| \bm{g}^\calB - \bm{z}|$. The shaded areas represent the 5th to 95th percentile.
Figure \ref{fig:R_estimation} depicts the full attitude estimation error $\|I_3 - \hat{R} R^\top\|$ with magnetometer, and the reduced attitude (vertical direction) error $1- \hat{\bm{g}}^\top \bm{g}/|\bm{g}|^2$
where $\hat{\bm{g}} = \hat{R} \bm{g}^\calB$. 
One observes that all estimation errors rapidly decrease and converge to zero for all initial conditions. These results confirm the expected asymptotic convergence of the proposed observer.
\begin{figure}[!h]
    \centering
    \includegraphics[width=1.03\linewidth]{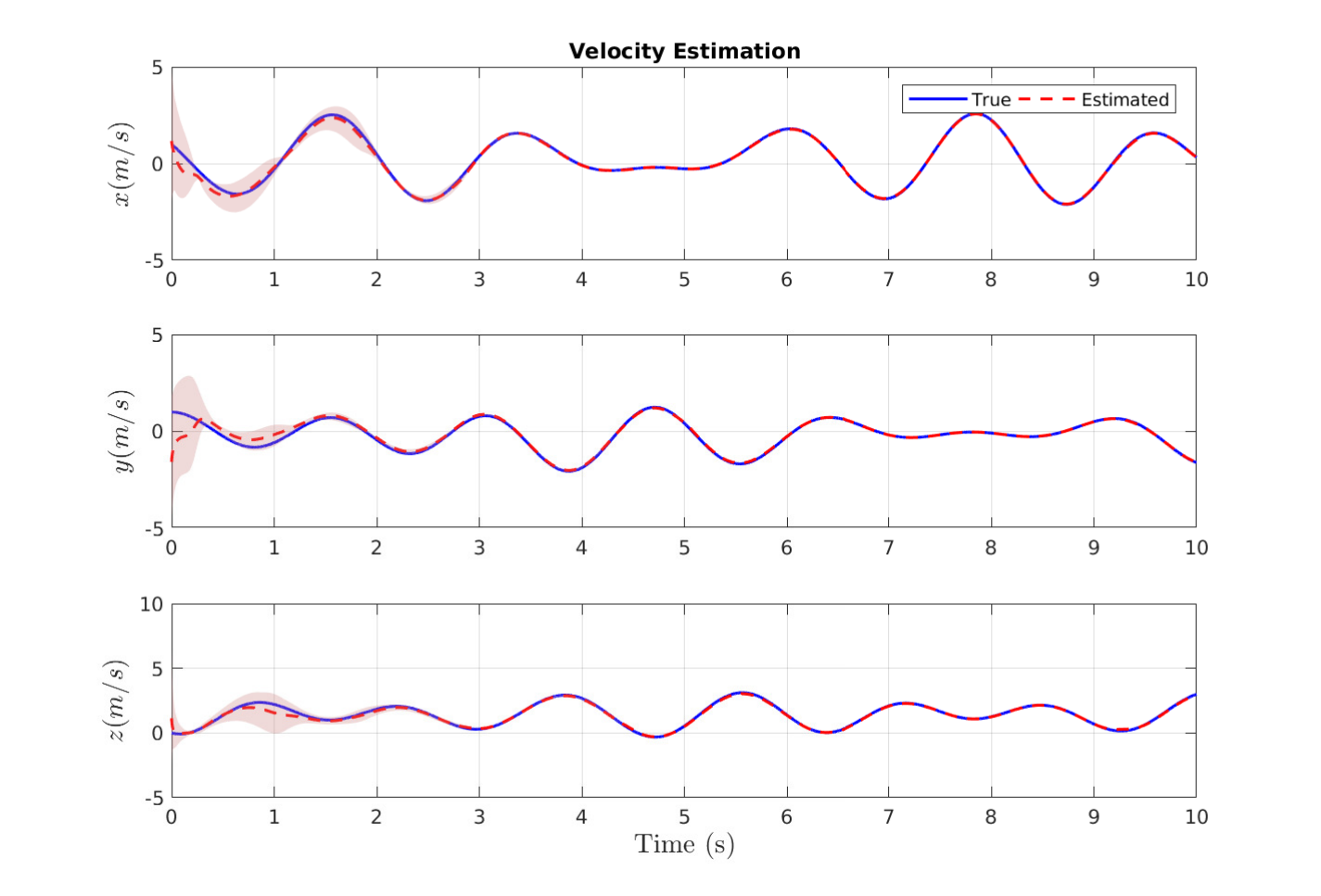}
    \includegraphics[width=1.03\linewidth]{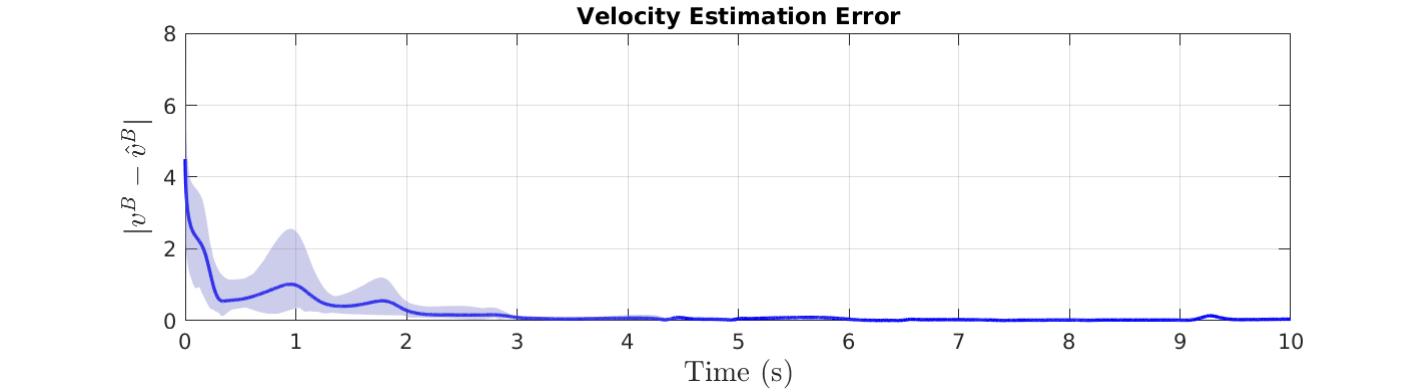}
    \vspace{-.3cm}
    \caption{Estimated and true body-frame linear velocity components, and observer estimation error $|\bm{v}^\calB - \hat{\bm{v}}^\calB|$.}
    \label{fig:vB_estimation}
\end{figure}

\begin{figure}[h]
    \centering
    \includegraphics[width=1.03\linewidth]{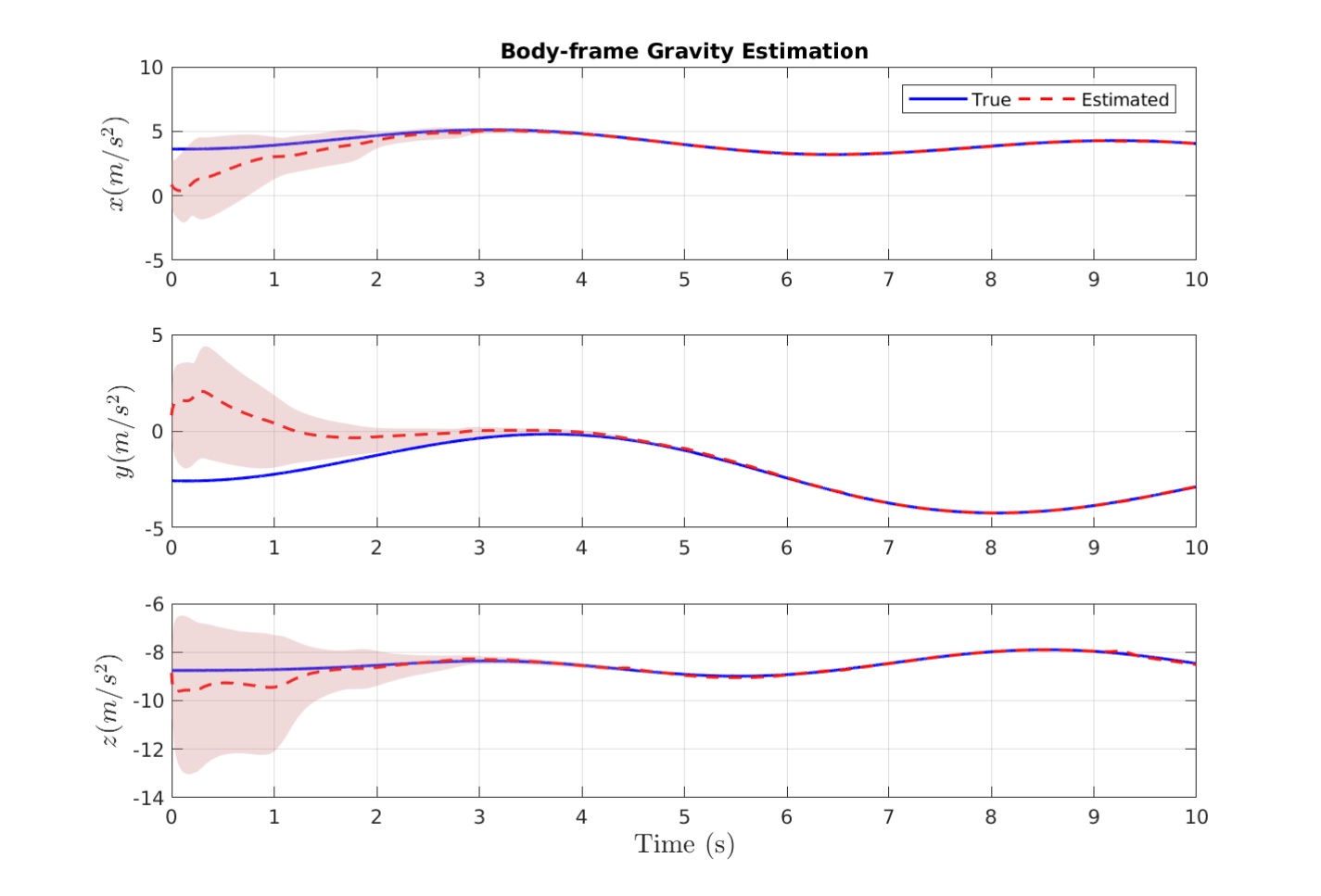}
    \includegraphics[width=1.03\linewidth]{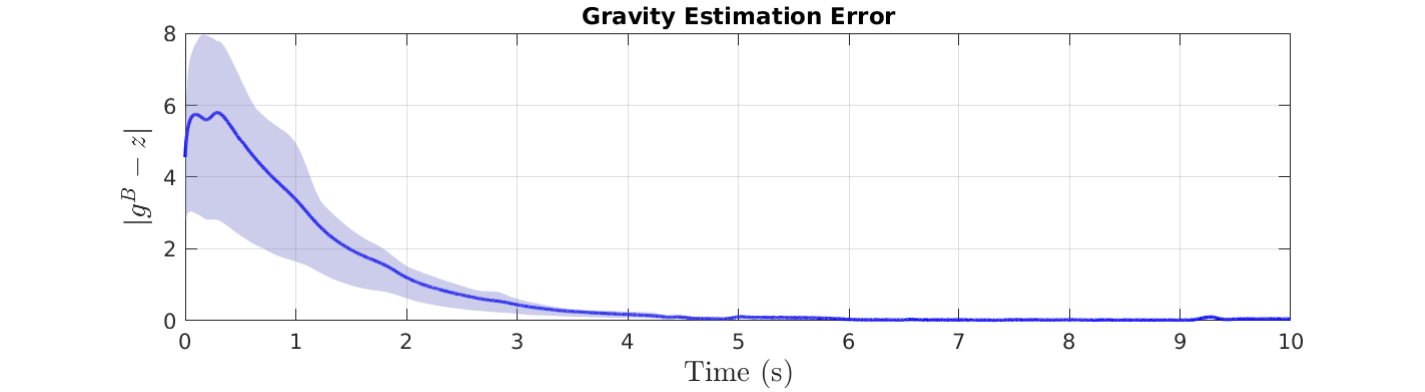}
    \vspace{-.3cm}
    \caption{Estimated and true body-frame gravity components, and observer estimation error $|\bm{g}^\calB - \bm{z} |$. 
    }
    \label{fig:gB_estimation}
\end{figure}
\begin{figure}[h]
    \centering
    \includegraphics[width=1.03\linewidth]{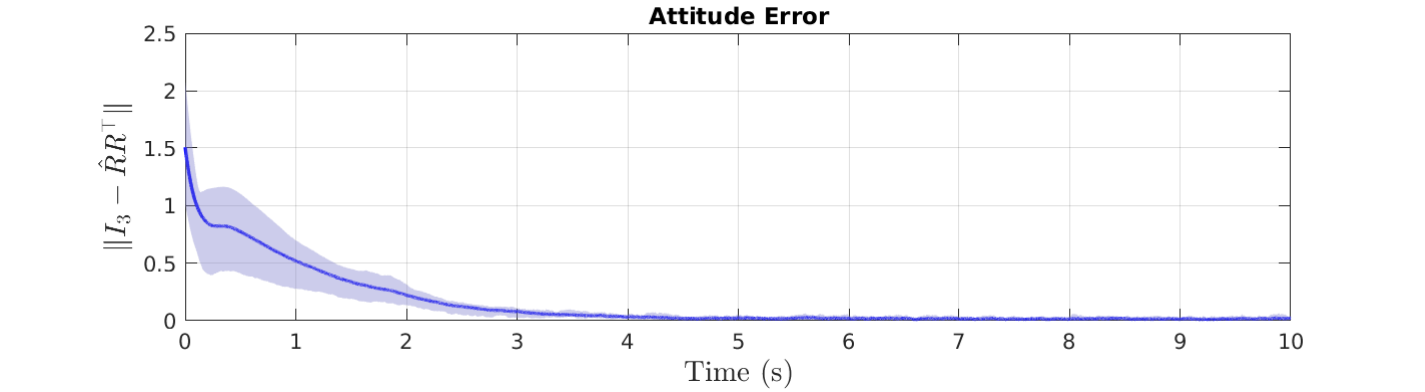}
    \includegraphics[width=1.03\linewidth]{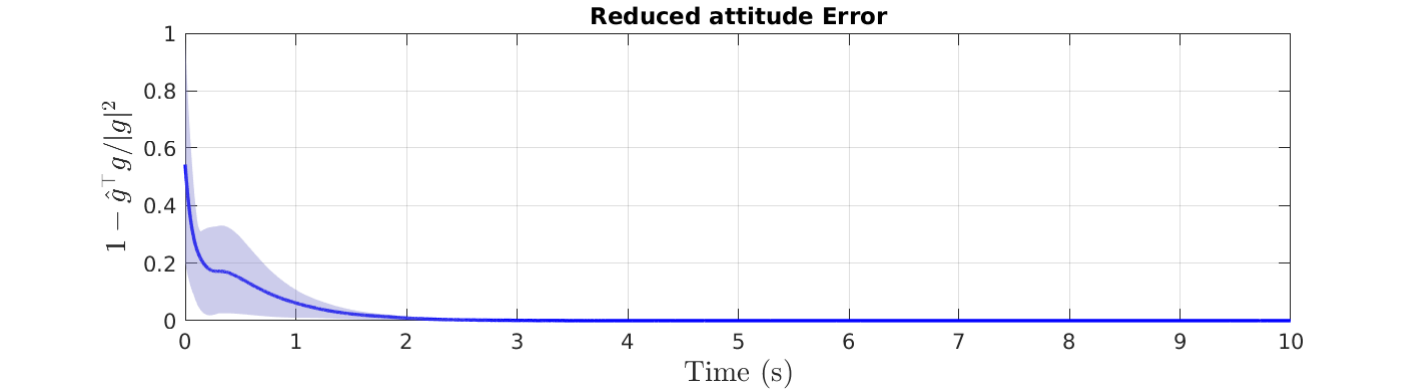}
    \vspace{-.3cm}
    \caption{Full attitude estimation error $\|I_3 - \hat{R} R^\top\|$, and reduced attitude (gravity direction) estimation error $1-\hat{\bm{g}}^\top \bm{g}/|\bm{g}|^2$. 
    }
    \label{fig:R_estimation}
\end{figure}

    %%%%%%%%%%%%%%%%%%%%%%%%%%%%%%%%%%%%%%%%%%%%%%%%%%%%%%%%%%%%%%%%%%%%%%%%%%%%%%%%

    \section{Conclusions} \label{sec:conclusion}
In this work, we proposed a cascaded observer for visual-inertial odometry that combines velocity direction and IMU measurements to estimate the navigation states.
A Riccati observer is designed to ensure global exponential stability of velocity estimation under persistently exciting motion, along with a nonlinear complementary filter on $\SO(3)$ that provides an estimate of the attitude using the estimated gravity vector and, optionally, a magnetometer for yaw estimation. The proposed framework achieves almost global asymptotic stability of the interconnection. An iterative pre-observer algorithm is introduced to estimate the velocity direction from sparse optical flow data. Both algorithms were validated in simulation.
Future work involves implementing and benchmarking the proposed solution against state-of-the-art VIO methods, and extending the iterative algorithm to jointly estimate $\bm{\eta}_v$ and $\bm{\omega}$.

%%%%%%%%%%%%%%%%%%%%%%%%%%%%%%%%%%%%%%%%%%%%%%%%%%%%%%%%%%%%%%%%%%%%%%%%%%%%%%%%
%\bibliographystyle{abbrv}
\bibliography{ref}

\end{document}